\title{A template for the \emph{arxiv} style}
\newcommand{\ruth}[1]{}
\newcommand{\ignoretosca}[1]{}
  \newcommand{\reals}{\mathbb{R}}
   \newcommand{\naturals}{\mathbb{N}}
   \renewcommand{\Pr}{\mathbb{P}}
   \newcommand{\indct}[1]{\boldsymbol{1}\!\left[ #1 \right]}
  \newcommand{\F}{{\mathcal F}}
  \renewcommand{\H}{{\mathcal H}}
  \newcommand{\C}{{\mathcal C}}
  \newcommand{\X}{\mathcal{X}}
  \newcommand{\Y}{\mathcal{Y}}
  \newcommand{\Z}{\mathcal{Z}}
  \newcommand{\Q}{\mathcal{Q}}
  \newcommand{\Pcal}{{\mathcal P}}
  \newcommand{\opt}{\mathrm{opt}}
   \newcommand{\supp}{\mathrm{supp}}
\newcommand{\learner}{\mathcal{A}}
\newtheorem{claim}{Claim}
\newtheorem{lemma}{Lemma}
\newtheorem{definition}{Definition}
\newtheorem{theorem}{Theorem}
\newtheorem{corollary}{Corollary}
\title{Impossibility of Characterizing Distribution Learning\\
-- a simple solution to a long-standing problem}
\author{anonymous authors}
\author{ {Tosca~Lechner}\\
	Cheriton School of Computer Science\\
	University of Waterloo, Canada\\
 and\\
	Vector Institute, Canada\\
	\texttt{tlechner@uwaterloo.ca} \\
	\And
	{Shai Ben-David}\\
	Cheriton School of Computer Science\\
	University of Waterloo, Canada\\
 and\\
Vector Institute, Canada\\
	\texttt{shai@uwaterloo.ca} \\
}
\begin{document}
\maketitle
\begin{abstract}
We consider the long-standing question of finding a parameter of a class of probability distributions that characterizes its PAC learnability. We provide a rather surprising answer - no such parameter exists.
Our techniques allow us to show similar results for several general notions of characterizing learnability
and for several learning tasks. We show that there is no notion of dimension that characterizes the \emph{sample complexity} of learning distribution classes.
We then consider the weaker requirement of only characterizing learnability (rather than the quantitative sample complexity function). We propose some natural requirements for such a characterization and go on to show that there exists no characterization of learnability that satisfies these requirements for classes of distributions.
Furthermore, we show that our results hold for various other learning problems. In particular, we show that there is no notion of dimension characterizing (or characterization of learnability) for any of the tasks: \emph{classification learning} for distribution classes, learning of binary classifications w.r.t. a restricted set of marginal distributions and learnability of classes of real-valued functions with continuous losses.  
\end{abstract}
\section{Introduction}

The celebrated `fundamental theorem of statistical learning' provides a clean characterization of PAC learnability of binary classification in terms of the combinatorial Vapnik Chervonenkis dimension (VC-dimension) \cite{BEHW89}. The finiteness of the VC-dimension characterizes the learnability of any hypothesis class. Furthermore, the learning rates for any class $H$ of binary valued functions are fully determined (up to constants) by the VC-dimension of that class.

That result sparked a quest for notions of dimension that similarly characterize the learnability of other learning tasks.
For some tasks, such as online learning of binary classifiers 
and learning p-concepts such dimensions have indeed been established. For other statistical learning tasks, some parameters have been proposed but not proven to provide the required characterizations.

In contrast, the results of \cite{BD_undecidable} showed for the first time that for some type of problems, such as EMX learnability, no such characterization can be proved to exist by the common axioms of mathematics (the ZFC set theory).

In this paper, we investigate the existence of characterizing dimensions for several statistical learning problems for which that question remained open (most notably the learnability of classes of discrete probability distributions). We show that, quite surprisingly, no such characterizations exist\footnote{At the end of section \ref{Notions_of_char} we discuss the relationship between our notions of learnability characterizations and the notion of \textit{combinatorial dimension} defined by \cite{BD_undecidable}.}.

Our results answer some long-standing open questions;
The survey paper \cite{Diakonikolas16} asks (Open Problem 1.5.1):
\textit{"Is there a “complexity measure” of a distribution
class C that characterizes the sample complexity of learning C?"}

 \cite{Hopkins_et_al}
 state \textit{"Unlike the
standard model, very little is known about distribution-family learnability. While a number of works have
made some progress on this front, a characterization of learnability remains elusive despite
some 30 years of effort".} (end of Section 4 there).

\cite{BenedekI91} ask about the characterization of PAC learnability of binary-valued classifiers w.r.t. a given class of probability distributions. They conjecture a characterization that is refuted by \cite{Dudley94} The latter repeats the question of finding a characterization for that task. Similar open questions are later stated by \cite{Kulkarni_at_al} and \cite{vidyasagar_et_al}.

\subsection{Notions of characterization of learning tasks}
Towards showing the ``characterization" of some learning tasks is impossible, we need clear definitions of what such characterizations are.

We consider two common types of characterization of learning:
\begin{enumerate}
    \item Quantitative notions that reflect the sample complexity of the learning task (the way the fundamental theorem of statistical learning shows that the \textit{Vapnik-Chervonenkis dimension} characterizes the learning rates of leaning w.r.t. a given hypothesis class).
    \item Qualitative notions that distinguish between learnable and non-learnable classes of models.
\end{enumerate}

We provide formal requirements for both types of characterizations (Section \ref{Notions_of_char}).

All of the characterizations of statistical learning tasks that we are aware of (including, VC-dimension, Littlestone dimension, fat-shattering dimension, Natarajan dimension, Graph dimension etc.) satisfy those requirements characterizations, and so do all of the notions conjectured to characterize some of the tasks for which no characterization had been proved. 

Our main results show that no such characterization is possible for a variety of learning tasks, including the task of learning discrete distributions(\cite{kearns1994paclearningdistributions,devroyelugosibook2001,DBLP:books/sp/Silverman86}),
\emph{classification learning} for distribution classes, learning of binary classifications w.r.t. a restricted sets of marginal distributions(\cite{BenedekI91}), and learnability of classes of real-valued functions with continuous losses. 





\subsection{Paper Outline}
In Section~\ref{sec:setup} we give a general definition for the kind of statistical learning models we will consider (for which distribution learning is a special case) and review some general definitions of ordered sets which we will use later in the paper. In Section~\ref{Notions_of_char} we introduce our notions of characterization of learnability. In Subsection~\ref{sec:samplecomplexity} we introduce quantitative notions of characterization for learnability, which aim to characterize the sample complexity. We state a combinatorial condition of learning tasks that implies that no such characterization exists.
In Section~\ref{sec:finchar} we define a qualitative notion characterization of learnability (Definition~\ref{def:finchar}), which is only required to distinguish learnable from non-learnable classes. 
We then show that there are some general conditions which imply uncharacterizability of a learning task (Theorem~\ref{thm:finchar}). In Section~\ref{sec:distribtionlearning}, we use the theorems of Section~\ref{Notions_of_char} to show the impossibility of characterizing distribution learning, for both quantitative (Theorem~\ref{thm:distributionweakdim}) and qualitative (Theorem~\ref{thm:finchar}) notions of characterization. 
We also show an impossibility result for characterizing classes of distribution which are learnable with polynomial sample complexity for a slightly more restrictive notion of qualitative characterization (Theorem~\ref{thm:polydistrfinchar}). 
Section~\ref{sec:others} shows uncharacterizability for other learning tasks using the results from Section~\ref{Notions_of_char} and following the same construction ideas as in Section~\ref{sec:distribtionlearning}. In particular, we show impossibility of quantitative and qualitative characterizations of classification learning of distribution classes (Theorem~\ref{thm:classweakdimension} and Theorem~\ref{thm:classfinchar}) and learning of real-valued functions with continuous losses (Theorem~\ref{thm:contlossweakdim} and Theorem~\ref{thm:contlossfinchar}). Lastly, we discuss some implications of our results and perspectives for future research in Section~\ref{sec:discussion}.

\section{Setup}\label{sec:setup}




\subsection{Learning model}
We consider a general notion of learning tasks. These consist of the following elements
\begin{itemize}
    \item a domain $\Z$ from which the input instances/training instances are sampled
    \item A class of benchmark models $H$ (in some cases we denote it by $\Q$).
    \item A class of permissible data generating distributions $\mathcal{P}\subset \Delta(\Z)$, where $\Delta(\Z)$ denotes all distributions over the domain $\Z$.
    \item A set of possible outputs of a learner $\F$. Usually, $H \subseteq \F$.
    \item A loss/ approximation measure $L: \F \times \Delta{\Z} \to \mathbb{R}_0^+$ (where $\mathbb{R}_0^+$ denotes the set of non-negative real numbers).
\end{itemize}
We denote the approximation error as $\opt(H,P)= \inf_{h\in \H} L(h, P) $.
\begin{definition}{PAC learnability}\label{def:generalPAC}
\begin{itemize}
    \item 
A class $H\times\Pcal$ is $\alpha$-agnostic PAC learnable w.r.t. to measure  $L: \F \times \Delta{\Z} \to \mathbb{R}_0^+$, 
if there is a learner $\mathcal{A}: \bigcup_{m\in\naturals} \Z^m \to \F $ and a sample complexity function $m^{\alpha}_H:(0,1)^2\to\naturals$, such that for every $\epsilon,\delta>0$, every $P\in \mathcal{P}$ and every $m\geq m{\alpha}_{H}(\epsilon,\delta)$, we have 
    \[ L(\mathcal{A}(S),P) \leq \alpha \cdot \opt(H,P) + \epsilon\]
    with probability $1-\delta$ over $S\sim P^m$.
    \item With a slight abuse of notation, we denote by  $m_{H}(\epsilon,\delta)$ the minimum number $m$ that satisfies the above requirement for $\alpha = 1$.
    \item We say a model $\H$ is learnable if $\H \times \Delta(\Z)$ is learnable.
    \item  We say $\H$ is PAC-learnable in the $\emph{realizable}$ w.r.t. $L$ if $\H \times \{P \in \Delta(\Z): opt(H,P) =0 \}$ is PAC learnable with respect to $L$. We will sometimes refer to the sample complexity of realizable learning by $m^{rlzb}$ to distinguish it from the sample complexity of agnostic learning.
    \item We say a class of distributions $\Pcal$ is PAC-learnable with respect to $L: \F \times \Delta(\Z) \to \mathbb{R}_0^+$ if $\F\times \Pcal$ is PAC learnable with respect to $L$.
    
    \end{itemize}
\end{definition}
\begin{definition}
   For a given learning task, we say that a class of outputs $H' \subset \F$ is an $\epsilon$-approximation for $H\times \Pcal$ w.r.t. to $L$, if for every $(h,p)\in \H\times \Pcal$, there is a $h' \in H'$ such that $L(h',p) \leq L(h,p) +\epsilon$.
   Using the same slight abuse of notation as in the definition of PAC learning, we will say that:
   \begin{itemize}
   \item $H'$ is an $\epsilon$-approximation for $H$ if it is an $\epsilon$-approximation for $H\times \Delta(\Z)$ 
   \item $H'$ is an $\epsilon$-approximation for $\Pcal$ if it is an $\epsilon$-approximation for $\F\times \Pcal$ 
   \end{itemize}
\end{definition}









\subsection{Some notions of ordered sets}
\begin{definition}[Cofinality]

Let $(X, \leq)$ be an ordered set. 
\begin{itemize}
\item For subsets $A, B \subseteq X$, we say that $A$ is \textit{cofinal} in $B$ if for every $b \in B$ there exists some $a \in A$ such that $b \leq a$. 
\item The \textit{cofinality of} an ordered set $(X, \leq)$ is the minimal cardinality of a subset $A$ that is cofinal in $X$.
\end{itemize}
    
\end{definition}

\noindent Note that for subsets $A, B, C$ of $X$, if $A$ is cofinal in $B$ and $B$
is cofinal in $C$ then $A$ is cofinal in $C$.

\begin{definition}[Dominance ordering of functions]
Let $(X, \leq_X)$, $(Y, \leq_Y)$ be linearly ordered sets where $X$ has no maximal element. For 
functions $f,g: X \to Y$,
we say that $f$ \textit{eventually dominates} $g$ if there exists some $x \in X $ such that for every $x' \in X$, if $x \leq_X x'$ then $g(x') \leq_Y f(x')$.
We denote this relation by $g \leq_{ed} f$.

\end{definition}

\begin{claim} Consider $\naturals^{\naturals}$ (the set of all functions from the natural numbers to natural numbers). The cofinality of $(\naturals^{\naturals}, \leq_{ed})$ is uncountable. 
\end{claim}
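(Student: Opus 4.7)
The plan is a diagonal argument, by contradiction. Suppose $(\naturals^{\naturals}, \leq_{ed})$ had a countable cofinal subset, say $\{f_i : i \in \naturals\}$. I would aim to construct a single function $g : \naturals \to \naturals$ that is \emph{not} eventually dominated by any $f_i$, which contradicts cofinality and hence forces the cofinality to be uncountable.

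For the construction, I would imitate the usual Cantor-style diagonalization adapted to the eventual-dominance order. Specifically, define
\[
  g(n) \;=\; 1 + \max\{f_1(n),\, f_2(n),\, \ldots,\, f_n(n)\}.
\]
This is a well-defined function in $\naturals^{\naturals}$ because the maximum of finitely many natural numbers exists. The key point is that once $n$ is large enough to ``see'' $f_i$ in the list (i.e., $n \geq i$), $g(n)$ is strictly larger than $f_i(n)$ by construction.

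Next I would verify that no $f_i$ eventually dominates $g$. Fix $i \in \naturals$ and any candidate threshold $n_0 \in \naturals$; choosing $n \geq \max(i, n_0)$ guarantees $g(n) \geq f_i(n) + 1 > f_i(n)$, so the eventual-dominance condition fails at $n$. Since this holds for every threshold $n_0$, we conclude $g \not\leq_{ed} f_i$ for every $i$, contradicting the assumed cofinality of $\{f_i : i \in \naturals\}$. Hence no countable subset can be cofinal, so the cofinality of $(\naturals^{\naturals}, \leq_{ed})$ is uncountable.

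There is no real obstacle here; the argument is a clean diagonalization. The only mildly subtle point is making sure the diagonal uses $f_1(n), \ldots, f_n(n)$ rather than a fixed-length prefix, so that every $f_i$ is eventually ``caught'' by the construction on a cofinal set of inputs rather than just once.
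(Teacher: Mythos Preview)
Your proposal is correct and essentially identical to the paper's own proof: both proceed by assuming a countable cofinal set $\{f_i : i \in \naturals\}$ and defining the diagonal function $g(n) = 1 + \max\{f_i(n) : i \leq n\}$ to witness that no $f_i$ eventually dominates $g$. The paper's argument is slightly terser but the construction and verification are the same.
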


\begin{proof}
  Consider any countable $A \subseteq \naturals^{\naturals}$, let $\{g_n : n \in \naturals\}$ be an enumeration of the members of $A$. Define $f : \naturals \to \naturals$
by $f(n) = \max \{g_i(n) : i \leq n\}+1$. Clearly $f$ dominates every member of $A$ (and no member of $A$ dominates $f$) showing that $A$ is not cofinal in $(\naturals^{\naturals}, \leq_{ed})$.  
\end{proof}

\section{Notions of characterization of learning tasks} \label{Notions_of_char}
As discussed in the introduction, we consider two types of characterizations - a quantitative notion that characterizes learning rates of classes and a qualitative one that only distinguishes between learnable and non-learnable classes.
we now elaborate our definitions of such notions.

\subsection{Notions of quantitative characterization of statistical learning}
\label{sec:samplecomplexity}
\begin{definition}A \emph{strong sample complexity dimension} is a mapping from $d: \mathcal{C} \to \naturals \cup \{\infty\}$, such that a class $\Q$ of models is PAC learnable if and only if $d(\Q) \neq \infty$ and there are functions $f: \naturals \to \naturals $ and $g:(0,1)^2 \to \naturals$ such that 
for every PAC learnable class of distributions $\Q$, $m_{\mathcal{Q}}(\epsilon, \delta) \leq f(d(\Q))g(\epsilon, \delta)$ for all $(\epsilon, \delta) \in (0,1)$. In other words, there is a sample complexity upper bound function that factorizes into a factor depending only on the dimension of a class and a factor depending only on the accuracy and confidence parameters.
h\end{definition}

Note that the fundamental theorem of statistical learning \cite{BEHW89} shows that the \textit{VC-dimension} is a strong sample complexity dimension for binary classification.


\begin{definition}
A \emph{weak sample complexity dimension} is a mapping from $d: \mathcal{C} \to \naturals \cup \{\infty\}$, such that a class $\Q$ of models is PAC learnable if and only if $d(\Q) \neq \infty$ and there are functions $f: \naturals \times (0,1)^2 \to \naturals $  such that 
for every PAC learnable class of distributions $\Q$, and every $(\epsilon, \delta) \in (0,1)$, $m_{\mathcal{Q}}(\epsilon, \delta) \leq f(d(\Q),\epsilon, \delta)$. In other words, all the information needed about a class of distributions $\Q$ to determine (or upper bound) its sample complexity function $m_{\mathcal{Q}}(\epsilon, \delta)$ is captured in its dimension $d(\Q)$.
\end{definition}

Clearly, every strong sample complexity dimension is also a weak one. We also note that, for a satisfying characterization, one might also require a lower bound of the sample complexity in terms of $f$ and $d$. However, as we are only presenting negative results, it suffices to show that even this less ambitious goal is not achievable.


Our main tool for showing the impossibility of having sample complexity dimensions that provide a quantitative characterization for learning tasks is the following lemma:

   \begin{lemma}\label{lemma:cofinal}
    Let $\mathcal{C}$ denote the family of all learnable classes w.r.t. some learning task. For any given learnable class $\Q$ consider the function $m_{\Q}(\frac{1}{\cdot},\frac{1}{7}): \naturals \to \naturals$ that maps a natural number $k$ to $m_{\Q}(\frac{1}{k},\frac{1}{7}) $. \\
    
    If the set $\{m_{\Q}(\frac{1}{k},\frac{1}{7}): k \in \naturals, \Q\in \mathcal{C} \}$ is cofinal in $\naturals^{\naturals}$ (under the eventual dominance ordering)
    then there exists no weak sample complexity dimension for that task.
    \end{lemma}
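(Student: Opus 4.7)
The plan is to argue by contradiction. Suppose, toward a contradiction, that there exists a weak sample complexity dimension $d:\mathcal{C}\to \naturals\cup\{\infty\}$ witnessed by a function $f:\naturals\times (0,1)^2\to\naturals$, so that $m_{\Q}(\epsilon,\delta)\leq f(d(\Q),\epsilon,\delta)$ for every learnable class $\Q\in\mathcal{C}$.

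The key idea is to freeze the confidence parameter at $\delta=\frac{1}{7}$ and slice $f$ into a \emph{countable} family of functions on $\naturals$. For each $n\in\naturals$, define $f_n:\naturals\to\naturals$ by $f_n(k):=f(n,\tfrac{1}{k},\tfrac{1}{7})$. Then $\{f_n : n\in\naturals\}$ is a countable subset of $\naturals^{\naturals}$, and by the defining property of $d$ we have, for every learnable $\Q$ and every $k\in\naturals$,
\[
m_{\Q}(\tfrac{1}{k},\tfrac{1}{7})\ \leq\ f(d(\Q),\tfrac{1}{k},\tfrac{1}{7})\ =\ f_{d(\Q)}(k).
\]
Thus each function $k\mapsto m_{\Q}(\tfrac{1}{k},\tfrac{1}{7})$ is pointwise (and hence eventually) dominated by some member of the countable family $\{f_n\}_{n\in\naturals}$.

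Now I invoke the claim proved just above: the cofinality of $(\naturals^{\naturals},\leq_{ed})$ is uncountable. Consequently, the countable set $\{f_n:n\in\naturals\}$ is \emph{not} cofinal in $(\naturals^{\naturals},\leq_{ed})$, so there exists some $h\in\naturals^{\naturals}$ that is not eventually dominated by any $f_n$. By the hypothesis of the lemma, $\{k\mapsto m_{\Q}(\tfrac{1}{k},\tfrac{1}{7}) : \Q\in\mathcal{C}\}$ is cofinal in $(\naturals^{\naturals},\leq_{ed})$, so we can pick a learnable $\Q\in\mathcal{C}$ with $h\leq_{ed} (k\mapsto m_{\Q}(\tfrac{1}{k},\tfrac{1}{7}))$. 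Combining this with the previous paragraph and transitivity of eventual dominance yields $h\leq_{ed} f_{d(\Q)}$, contradicting the choice of $h$.

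I do not anticipate a real obstacle: once the right slicing $f_n(k)=f(n,\tfrac{1}{k},\tfrac{1}{7})$ is chosen, the argument is essentially a diagonalization packaged through the preceding cofinality claim. The only delicate point is to be careful about how the set in the hypothesis is parsed — it must be read as the collection of functions $k\mapsto m_{\Q}(\tfrac{1}{k},\tfrac{1}{7})$ indexed by $\Q\in\mathcal{C}$, viewed as a subset of $\naturals^{\naturals}$ — and to apply transitivity of cofinality/eventual dominance cleanly between the three relevant collections (the $f_n$'s, the sample complexity functions, and all of $\naturals^{\naturals}$).
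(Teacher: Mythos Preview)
Your proof is correct and follows essentially the same approach as the paper: both slice $f$ at $\delta=\tfrac{1}{7}$ to obtain the countable family $f_n(k)=f(n,\tfrac{1}{k},\tfrac{1}{7})$, observe that this family is cofinal in the set of sample-complexity functions, and then derive a contradiction with the uncountable cofinality of $(\naturals^{\naturals},\leq_{ed})$. Your version is actually a bit more explicit, picking a witness $h$ and chasing eventual dominance through transitivity, whereas the paper phrases the same step via transitivity of cofinality.
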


\begin{proof}
    Let $d: \mathcal{C} \to \naturals$ be a weak sample complexity dimension. 
    For any $d \in \naturals$, let $ f_d'(k)= f(d,1/k, 1/7)$ for all $k$.
    $\{f_{d}': \Q \in \mathcal{C} \}$ is cofinal in the set of sample complexity functions 
     $ \{ m_{\Q}(1/\cdot,1/7): \naturals\to \naturals: \Q \in \mathcal{C}\}$ (under the $\leq_{ed}$ ordering of functions). Thus the cofinality of $( \{ m_{\Q}(1/\cdot,1/7): \naturals\to \naturals: \Q \in \mathcal{C}\},\leq_{ed})$ is at most countable.
     Since we assume that the set $\{m_{\Q}(\frac{1}{k},\frac{1}{7}): k \in \naturals, \Q\in \mathcal{C} \}$ is cofinal in $\naturals^{\naturals}$, we get a contradiction to  the uncountable cofinality of $\naturals^{\naturals}.$

\end{proof}

\begin{definition} \label{monotone_char}
    For an ordered set $(\X, \leq)$, we say that a notion of dimension $d: \mathcal{C} \to \X$ is \textit{monotonic} if for every pair of classes $\Q_1, \Q_2$,  the implied sample complexity functions $f(d(\Q),1/k, 1/7)$ are monotonically increasing. Namely, 
\[d(\Q_1) \geq d(\Q_2) ~\mbox{implies that} ~ f(d(\Q_1),1/k, 1/7) \geq_{ed} f(d(\Q_2),1/k, 1/7).\]
\end{definition}

\begin{theorem}\label{thm:cofinalreal}
If the set $\{m_{\Q}(\frac{1}{\cdot},\frac{1}{7}): \Q\in \mathcal{C} \}$ is cofinal in $\naturals^{\naturals}$ (under the eventual dominance ordering)
    then
there exists no monotonic real-valued function that is a weak sample complexity dimension for that task.\\
\end{theorem}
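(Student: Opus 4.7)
The plan is to mimic the proof of Lemma~\ref{lemma:cofinal} and derive a contradiction to the uncountable cofinality of $(\naturals^\naturals, \leq_{ed})$ established in Claim~1. Assume toward contradiction that $d: \mathcal{C} \to \reals$ is a monotonic weak sample complexity dimension, witnessed by some $f: \reals \times (0,1)^2 \to \naturals$ satisfying $m_\Q(\epsilon, \delta) \leq f(d(\Q), \epsilon, \delta)$. The goal is to construct a \emph{countable} family of functions $\naturals \to \naturals$ that is cofinal (under eventual dominance) in $\{m_\Q(1/\cdot, 1/7): \Q \in \mathcal{C}\}$; combined with the hypothesis that this set is cofinal in $\naturals^\naturals$, transitivity of cofinality will then produce a countable cofinal subfamily of $\naturals^\naturals$, contradicting Claim~1.

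The step that replaces the trivial countability in Lemma~\ref{lemma:cofinal} (where $d(\mathcal{C}) \subseteq \naturals$ automatically) is the elementary fact that every subset of $\reals$ has a countable cofinal subset in the usual order. I would verify this by splitting into cases: if $d(\mathcal{C})$ has a maximum, one element suffices; if $d(\mathcal{C})$ is unbounded above, pick $\Q_n \in \mathcal{C}$ with $d(\Q_n) \to \infty$; otherwise set $R := \sup d(\mathcal{C}) \in \reals$ and pick $\Q_n \in \mathcal{C}$ with $d(\Q_n) \to R$. In every case we obtain a sequence $(\Q_n)_{n \in \naturals} \subseteq \mathcal{C}$ such that $\{d(\Q_n) : n \in \naturals\}$ is cofinal in $d(\mathcal{C})$ as a subset of $\reals$.

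Monotonicity (Definition~\ref{monotone_char}) is exactly what transfers this cofinality on the real line into cofinality at the level of sample complexity functions. For any $\Q \in \mathcal{C}$, pick $n$ with $d(\Q) \leq d(\Q_n)$; then monotonicity yields $f(d(\Q), 1/\cdot, 1/7) \leq_{ed} f(d(\Q_n), 1/\cdot, 1/7)$, while the weak sample complexity dimension property gives $m_\Q(1/k, 1/7) \leq f(d(\Q), 1/k, 1/7)$ for every $k$. Chaining these inequalities produces $m_\Q(1/\cdot, 1/7) \leq_{ed} f(d(\Q_n), 1/\cdot, 1/7)$, so the countable family $\{f(d(\Q_n), 1/\cdot, 1/7) : n \in \naturals\}$ is cofinal in $\{m_\Q(1/\cdot, 1/7) : \Q \in \mathcal{C}\}$, completing the contradiction.

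The main obstacle, and the reason the monotonicity hypothesis cannot be dropped from this argument, is precisely this last transfer: without monotonicity the real values that $d$ assigns carry no information about the relative ordering of sample complexities, so even though $d(\mathcal{C})$ always has countable cofinality in $\reals$, the induced set of upper bounds $\{f(d(\Q), 1/\cdot, 1/7) : \Q \in \mathcal{C}\}$ need not inherit a countable cofinal subfamily. Everything else is bookkeeping directly parallel to the proof of Lemma~\ref{lemma:cofinal}.
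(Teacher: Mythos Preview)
Your proposal is correct and follows essentially the same approach as the paper. The paper's own proof is a single sentence (``Noting that the real numbers have countable cofinality, the proof of the natural-valued dimension applies to the monotonic real-valued dimension as well''), and your write-up simply unpacks this: you extract a countable cofinal subset of $d(\mathcal{C})\subseteq\reals$, use monotonicity to lift this to cofinality among the bounding functions $f(d(\Q),1/\cdot,1/7)$, and then invoke Claim~1 exactly as in Lemma~\ref{lemma:cofinal}. Your explicit remark on why monotonicity is indispensable for this transfer is a helpful addition that the paper leaves implicit.
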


\begin{proof}
    Noting that the real numbers have countable cofinality, the proof of the natural-valued dimension applies to the monotonic real-valued dimension as well. 
\end{proof}

\subsection{Notions of qualitative characterization of statistical learning} \label{sec:finchar}
The notions of dimension that we have discussed above were \emph{quantitative} - aiming to capture the sample complexity functions of learning classes.
We showed that such dimensions do not exist for problems like distribution learning.

Lacking quantitative notions one can still seek \emph{qualitative} characterizations of learnability. Namely, conditions that distinguish learnable classes from non-learnable ones.
In the case of binary classification tasks, the distinction between finite and infinite VC-dimension serves as such a characterization.

\begin{definition}\label{def:finchar}
    A \textit{finitary characterization} of learnability for a learning task is a countable set of formulas\footnote{formally, these are properties of the learning problem expressed as first order formulas in many-sorted logic that has types/sorts for elements of the class of models $\F$, generating distributions (members of $\Pcal$), domain elements and rational numbers for values of the loss function $L$. For brevity we keep it be clarified by the examples below.} $W$ such that:
    \begin{enumerate}
        \item A class $H$ is not learnable if and only if it satisfies all the formulas in $W$.
        \item For every $ \alpha \in W$ and every $H$ that satisfies $\alpha$ there is a finite subset $H_{\alpha} \subseteq H$ such that for every $H'$, if $H_{\alpha} \subseteq H'$ then $H'$ satisfies $\alpha$.\\

        We say that a finitary characterization $W$ is \textit{uniformly bounded} if for every $\alpha \in W$ there is a finite number $n_{\alpha}$ such that for every $H$ satisfying $\alpha$ there is a subset $H_{\alpha} \subseteq H$ as above of size at most $n_{\alpha}.$

    \end{enumerate}
    
\end{definition}

    Note that most (if not all) of the known characterizations of learnability of learning tasks are  finitary. 

    \noindent {\bf Examples:}
    \begin{enumerate}
    \item The characterization of binary classification learning by VC-dimension;
    The characterizing $W$ can be any set that contains the formulas stating \textit{"$H$ shatters a set of size $d$"} for infinitely many $d$'s.
    \item The characterization of online learnability by the Littlestone dimension;  The characterizing $W$ can be any set that contains the formulas stating \textit{"$H$ L-shatters a tree of depth $d$"} for infinitely many $d$'s.
 dimension characterizing robust learning    \item The characterization of multi-class learnability by the finiteness of the Natarajan dimension, of by the finiteness of the graph dimension.
    \item The characterizations by a combinatorial dimension based on the one-inclusion graph. In these characterizations,
    the complexity of a problem is demonstrated by a finite graph (the 1-inclusion graph). However, there is no a-priory bound to the sizes of these graphs. The complexity parameter of a given problem is reflected the out-degree of some orientation of that graph. Such characterizations were shown for multi-class learning \cite{DanielyS14}, \cite{BrukhimCDMY22}, and of robust learning \cite{Montasser2022AdversarialCharacterization}.
    These are the only examples we are aware of of a finitary characterization of learnability that are not uniformly bounded. 
    \end{enumerate}

    There are also several conjectured characterizations that fall into this category. For example
    \begin{enumerate}
        \item \cite{BenedekI91} conjecture that a class of binary-valued functions, ${\cal C}$, over some domain set $\X$  
        is learnable w.r.t. a class ${\cal P}$ of probability measures over the same domain  if and only if 
        for every $\epsilon>0$, $\sup_{P \in {\cal P}}N(\epsilon, {\cal C}, P) < \infty$, where $N(\epsilon, {\cal C}, P)$
        is the size of a minimal set $A \subseteq {\cal C}$ such that for every $h \in {\cal C}$, there exists some $h' \in A$ for which $P(h \Delta h') \leq \epsilon$.\\

        To see that this conjectured characterization is finitary, note that for every $P$ and every $\epsilon$, $N(\epsilon, {\cal C}, P)$ is equal, within a factor of 2, to the maximum size of a set $B \subseteq {\cal C}$ such that for every $h \neq h' \in B$,  $P(h \Delta h') > \epsilon$.

        Let $\alpha_{n, \epsilon}$ state that there exists a probability distribution $P \in {\cal P}$ and hypotheses
        $h_1, \ldots h_n \in {\cal H}$ such that for all $i \neq j \leq n$, $P(h_i \Delta h_j) > \epsilon$. Now let 
        $W= \{\alpha_{n, 1/k} : n, k \in \naturals\}$. 
        \item Characterizing the learnability of a class of probability discrete distributions by the finiteness of the VC-dimension of the Yatracos sets induced by that class\footnote{ For a class $F$ of functions
from $X$ to $\reals$, their Yatracos class is the family of subsets of
$X$ defined as
\[Y(F) := \{\{x \in X : f_1(x) \geq f_2(x)\} ~\mbox{for some} ~f_1, f_2 \in F\}\]}. 
    
    \end{enumerate}
\begin{theorem}  \label{thm:finchar}
    The learnability of any learning task that satisfies the following two properties cannot be characterized by a finitary characterization.
    \begin{enumerate}
        \item Every finite union of learnable classes of hypotheses is learnable.
        \item There exists a learnable class $H_0$ and non-learnable classes $\{H_k : k \in \naturals\}$ such that for every specific learning instance and every $k \in \naturals$, $H_0$ is an $\epsilon_k$ approximation of $H_k$ w.r.t. that learning instance and $\lim_{k \to \infty} \epsilon_k = 0$.
    \end{enumerate}
\end{theorem}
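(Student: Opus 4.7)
The plan is a proof by contradiction. Suppose $W = \{\alpha_1, \alpha_2, \ldots\}$ were a countable finitary characterization of non-learnability for the task in question. I will construct a single class $G$ that simultaneously satisfies every $\alpha_i \in W$ (so that $W$ forces $G$ to be non-learnable) and is itself PAC-learnable under the two closure hypotheses of the theorem, giving the desired contradiction. The construction bundles, for each formula $\alpha_i$, a finite witness drawn from one of the non-learnable classes $H_{k_i}$ of hypothesis (2), choosing $k_i$ so that the approximation error $\epsilon_{k_i}$ shrinks rapidly with $i$; the approximation hypothesis then lets the fixed learnable $H_0$ ``absorb'' the tail of witnesses.

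Concretely, for each $i$ pick $k_i$ with $\epsilon_{k_i} < 2^{-i-1}$, which is possible since $\epsilon_k \to 0$. Each $H_{k_i}$ is non-learnable, hence satisfies every formula in $W$; in particular, $H_{k_i}$ satisfies $\alpha_i$, so by Definition \ref{def:finchar}(2) there is a finite $F_i \subseteq H_{k_i}$ such that every $H' \supseteq F_i$ satisfies $\alpha_i$. Set
\[ G \;=\; H_0 \,\cup\, \bigcup_{i \in \naturals} F_i. \]
Since $G \supseteq F_i$ for every $i$, $G$ satisfies every $\alpha_i$, so according to $W$ the class $G$ is not learnable.

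The core step is to show that $G$ is, in fact, learnable. Given tolerance $\epsilon>0$ and confidence $\delta>0$, choose $N$ with $2^{-N} < \epsilon/2$ and set $G_N = H_0 \cup F_1 \cup \cdots \cup F_{N-1}$. Singleton classes are trivially learnable (the learner just outputs the single hypothesis), so by iterating hypothesis (1) each finite $F_i$ is learnable, and one more application of (1) gives learnability of $G_N$. I then verify that $G_N$ is an $\epsilon/2$-approximation of $G$: any $h \in G_N$ is handled by $h'=h$; and any $h \in F_i$ with $i \geq N$ lies in $H_{k_i}$, so the $\epsilon_{k_i}$-approximation property of $H_0$ supplies an $h' \in H_0 \subseteq G_N$ with $L(h',P) \leq L(h,P) + \epsilon_{k_i} < L(h,P) + \epsilon/2$. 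Taking infima, $\opt(G_N, P) \leq \opt(G, P) + \epsilon/2$ for every $P$, so running the learner for $G_N$ at tolerance $\epsilon/2$ and confidence $\delta$ returns, with probability at least $1-\delta$, a hypothesis with loss at most $\opt(G, P) + \epsilon$. Hence $G$ is PAC-learnable, contradicting the previous paragraph.

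The main obstacle I anticipate is controlling the infinite union defining $G$: one has to worry that, no matter how the finite witnesses are assembled, their accumulated union should inherit non-learnability. The two hypotheses of the theorem are calibrated precisely to neutralize this concern—closure under finite unions (together with triviality of singletons) handles each truncation $G_N$, and the vanishing-approximation hypothesis lets $H_0$ cover the discarded tail uniformly well.
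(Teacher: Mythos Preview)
Your proof is correct and follows essentially the same route as the paper's: assemble a class from finite witnesses $F_i \subseteq H_{k_i}$ for each $\alpha_i$, observe it satisfies all of $W$, then show it is learnable because for any $\epsilon$ only finitely many $F_i$ fail to be $\epsilon/2$-approximated by $H_0$, so a finite union (learnable by hypothesis~(1)) handles the head and $H_0$ absorbs the tail. Your version differs only cosmetically---you reindex to force $\epsilon_{k_i}<2^{-i-1}$ rather than using $\{k:\epsilon_k\ge\epsilon/2\}$ directly, you include $H_0$ in $G$ up front, and you spell out why finite sets are learnable via singletons and hypothesis~(1)---but the argument is the same.
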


\begin{proof}
Assume, b.w.o.c. that $W$ is a finitary characterization of the learning task.
    Let $W=\{\alpha_k : k \in \naturals\}$ be any enumeration of $W$. For each $H_k$ let ${\hat H_k}$ be a finite subset of $H_k$ be such that every $H \supset {\hat H_k}$ satisfies $\alpha_k$ and let ${\hat H} = \bigcup_{k \in \naturals} {\hat H_k} $. On one hand, since for every $k$, ${\hat H_k} \subseteq {\hat H}$, ${\hat H}$ satisfies every $\alpha_k$ and is therefore not learnable (by the first requirement from a characterizing $W$). Towards a contradiction, let us show that ${\hat H}$ is learnable. This holds because given any $\epsilon>0$, the set $K_{\epsilon}=\{k: \epsilon_k \geq \epsilon/2 \}$ is a finite set. Therefore by our assumptions on the learning task, the class $H_0 \cup \bigcup_{k \in K_{\epsilon}} {\hat H_k}$ is learnable. Therefore, for any given $\delta$ there is a learner $L$ and some $m(\epsilon/2, \delta)$ so that training samples of larger size guarantee $(\epsilon/2, \delta)$ success for $L$ on such samples. Since $H_0$ is an $\epsilon/2$ approximation to each $H_k$ for which $k \not\in K_{\epsilon}$ (w.r.t. the given learning instance)
    so being $\epsilon/2$ off the minimum loss minimizer on $H_0$ implies being within $\epsilon$ of the loss minimizer in ${\hat H}$.
\end{proof}

\subsection{Comparing the different notions of characterization}

The two types of characterizations introduced above are related but none of them implies the other. Sample complexity dimensions do not restrict the format (or syntax) of a characterization - the dimension function $d$ can be \textit{any} function (from classes to reals or natural numbers). In this respect, the notion of finitary characterization is more restrictive - it restricts the format of the characterization. 

On the other hand, finitary characterizations are weaker, in the sense that they do not provide any information about learning rates. They are only required to distinguish learnable from non-learnable classes. 

In many cases, however, there are characterizations that meet both definitions. Every notion of sample complexity dimension where the statements of the form $d(H) \geq k$
have finite size `evidences' (like a set of members of $H$ shattering a domain subset).
In such case the finitary characterization set $W$ is just the set of statements
$\{d(H) \geq k : k \in \naturals\}$.

The notion of \textit{finite character dimension} of \cite{BD_undecidable} has two definitions there. The first one requires that it "\textit{Characterizes learnability}: A class $F$ is PAC learnable in the model if and only if
$D(F)$ is finite.
\textit{Of finite character}: $D(F)$ has a “finite” character in the following sense: for every $d \in \naturals$ and a class $F$, the statement $D(F) \geq d$ can be demonstrated by a finite set
of domain points and a finite set of members of $F$". 
This definition is clearly equivalent (for classes of functions) to our notion of finitary characterization.

The second definition there, requiring that the statements $D(F) \geq d$ can be expressed by certain first-order formulas, is more syntactically restrictive than our definition, but on the other hand, more lenient as it does not require the existence of finite evidence sets.



\section{Impossibility of Characterizing Distribution Learning}\label{sec:distribtionlearning}
For now, we consider learning over the domain $\mathcal{X} = \mathbb{N}$. Thus all subsets of our domain are measurable.
We consider learning of distribution classes with respect to total variation distance, i.e. our distance measure over distributions is given by $TV(p_1, p_2) = \sup_{A\subset \mathcal{X}} |p_1(A) - p_2(A)|$. Concretely, we consider the following PAC learning task.

\begin{definition}[ (realizable) PAC learning of a distribution class \cite{DBLP:books/sp/Silverman86,devroyelugosibook2001}]
    We say that a class $\mathcal{Q}$ of probability distributions over some domain set $\mathcal{X}$ is \textit{PAC learnable} if there exists a function $\learner: \bigcup_{m \in \naturals}\mathcal {X}^m \to \mathcal{P}$
    and a function $m_{\mathcal{Q}}^{rlzb}: (0,1)^2 \to \naturals$ such that for every $\epsilon, \delta \in (0,1)^2$ and every $Q \in \Q$, if $m \geq m_{\Q}^{rlzb}(\epsilon, \delta)$ then
    \[\Pr_{S \sim Q^m} [TV(\learner(S), Q) > \epsilon] \leq \delta.\]
\end{definition}

\begin{definition}[$3$-agnostic PAC learning of distribution class\cite{DBLP:books/sp/Silverman86,devroyelugosibook2001}]
       We say that a class $\mathcal{Q}$ of probability distributions over some domain set $\mathcal{X}$ is \textit{PAC learnable} if there exists a function $\learner: \bigcup_{m \in \naturals}\mathcal {X}^m \to \mathcal{P}$
    and a function $m_{\mathcal{Q}}: (0,1)^2 \to \naturals$ such that for every $\epsilon, \delta \in (0,1)^2$ and every $Q \in \Q$,
    
    if $m \geq m_{\Q}(\epsilon, \delta)$ then

    \[\Pr_{S \sim Q^m} [TV(\learner(S), Q) > 3\cdot\inf_{Q'\in \Q}(TV(Q,Q')) + \epsilon] \leq  \delta.\]
\end{definition}
We note that these definitions are special cases of the PAC learning definition in the Setup Section (Definition~\ref{def:generalPAC}), where $\Z=\X=\naturals$, $\F=\Delta(\naturals)$, $\Q= H$ and $L = TV: \Delta(\naturals) \to \Delta(\naturals) \to \mathbb{R}_{0}^+$.

We now state the two main theorems of this section, showing the impossibility of both quantitative as well as quantitative characterizations of distribution learning.

\begin{theorem}\label{thm:distributionweakdim}
There is no weak sample complexity dimension for distribution learning (neither in the realizable nor in the 3-agnostic case of distribution learning). 
\end{theorem}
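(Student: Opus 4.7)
The strategy is to invoke Lemma~\ref{lemma:cofinal}. For both the realizable and the $3$-agnostic regime, it suffices to exhibit, for every function $f:\naturals\to\naturals$, a learnable class $\Q_f$ of discrete distributions on $\naturals$ whose sample complexity at accuracy $1/k$ and confidence $1/7$ eventually dominates $f(k)$. The $3$-agnostic case will follow from the realizable case since any $3$-agnostic learner is, in particular, a realizable learner on $\Q_f$, so $m^{rlzb}_{\Q_f}\leq m^{3\text{-agn}}_{\Q_f}$; the cofinality of the realizable sample complexities then forces cofinality of the $3$-agnostic ones, provided we can verify that the constructed classes are $3$-agnostically learnable as well.

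I would build $\Q_f$ as a disjoint union $\Q_f=\bigcup_{k\in\naturals}\Q^{(k)}$, where each $\Q^{(k)}$ is supported on a finite block $S_k\subseteq\naturals$ with the $S_k$ pairwise disjoint. Each $\Q^{(k)}$ is engineered so that (a) any two of its members lie within total variation distance at most $1/k$, and (b) realizably PAC learning $\Q^{(k)}$ to TV accuracy $1/(3k)$ with confidence $1/7$ requires at least $f(k)$ samples. A concrete instantiation is to take $\Q^{(k)}$ to be the $2^{d_k}$ product Bernoulli distributions on $\{0,1\}^{d_k}$, encoded bijectively into $S_k$, with each coordinate bias chosen from $\{1/2-\alpha_k,\ 1/2+\alpha_k\}$, where $\alpha_k=1/(6k\sqrt{d_k})$ and $d_k=\lceil f(k)/k^2\rceil$. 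Pinsker's inequality then bounds pairwise TV by $1/k$, and the classical minimax lower bound for estimating a $d_k$-dimensional product Bernoulli distribution at TV accuracy $1/(3k)$ yields sample complexity $\Omega(d_k k^2)\geq f(k)$.

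Learnability of the combined class follows cleanly from the disjoint-support structure: at target accuracy $\epsilon$ the learner reads the first sample to recognize the block $S_k$ containing it; for $k\geq 1/\epsilon$ property (a) ensures that any fixed element of $\Q^{(k)}$ is within $\epsilon$ of the target, while for the finitely many $k<1/\epsilon$ the learner runs the finite-class learner for $\Q^{(k)}$. Hence $m_{\Q_f}(\epsilon,\delta)<\infty$ for every $\epsilon,\delta>0$, and $m_{\Q_f}(1/(3k),1/7)\geq m_{\Q^{(k)}}(1/(3k),1/7)\geq f(k)$ delivers the cofinality demanded by Lemma~\ref{lemma:cofinal}. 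The main obstacle is the lower-bound half of property (b): one needs a Fano- or Assouad-type packing argument applied to the product-Bernoulli family to certify that any learner truly requires $\Omega(d_k/\epsilon^2)$ samples even when the coordinate biases are confined to the narrow window of half-width $\alpha_k$. Once this is in hand, the realizable case is complete, and the $3$-agnostic case follows from $m^{rlzb}\leq m^{3\text{-agn}}$ combined with a standard reduction (e.g.\ Scheff\'e tournaments applied to finite covers of each $\Q^{(k)}$, exploiting the disjoint-support structure to localize) to establish $3$-agnostic learnability of $\Q_f$.
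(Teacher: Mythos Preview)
Your high-level plan coincides with the paper's: both invoke Lemma~\ref{lemma:cofinal} and, for an arbitrary target growth function, assemble a learnable class as a union $\bigcup_k \Q^{(k)}$ where the $k$-th block has TV diameter $O(1/k)$ (so that learnability reduces to a finite subclass at each scale) yet is hard at accuracy $\Theta(1/k)$. The paper implements this with the blocks $P_{\eta,n}=\{(1-\eta)\delta_0+\eta\,U_A:A\subseteq\{1,\dots,n\}\}$, whereas you use biased product Bernoullis on disjoint supports. The paper's choice is cleaner on both ends: the lower bound (Lemma~\ref{lemma:lowerbound}) is a short no-free-lunch pairing argument rather than a Fano/Assouad packing, and $3$-agnostic learnability (Lemma~\ref{lemma:Qetanlearnable}) is immediate because the entire class admits a finite $\epsilon$-cover for every $\epsilon$ (every member is within $\eta$ of the single point $\delta_0$), so Theorem~\ref{thm:finiteclasses} applies directly.

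Two points in your version need tightening. First, the constants: with $\alpha_k=1/(6k\sqrt{d_k})$ the TV diameter of $\Q^{(k)}$ is indeed $O(1/k)$, but a Gilbert--Varshamov packing at Hamming radius $d_k/4$ has pairwise TV separation only $c\sqrt{d_k}\,\alpha_k=c/(6k)$ with an absolute constant $c$ that is well below $1$, so the separation need not exceed $2/(3k)$ as Fano requires at accuracy $1/(3k)$. You will have to take the accuracy threshold to be $c'/k$ for a sufficiently small absolute constant $c'$; this is harmless for cofinality after reindexing, but it does mean your stated parameters do not work as written. Second, your $3$-agnostic learnability sketch is incomplete in a more substantive way: because the blocks $S_k$ are pairwise disjoint, members of distinct $\Q^{(k)}$ are at TV distance $1$, so $\Q_f$ has \emph{no} finite $\epsilon$-cover for any $\epsilon<1$, and a Scheff\'e tournament cannot be run globally. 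You genuinely need the localization step you allude to---identify the block carrying most of the empirical mass, run the finite-class learner there, and argue separately when no block dominates (in which case $\mathrm{opt}$ is bounded away from zero)---and this should be spelled out. Both issues are fixable, but they are exactly the complications the paper's mixture-with-$\delta_0$ construction sidesteps.
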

\begin{corollary}\label{cor:distributionrealweakdim}
There exist no monotonic real-valued function that is a weak sample complexity dimension for distribution learning.\\
\end{corollary}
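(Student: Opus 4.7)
The plan is to obtain Corollary~\ref{cor:distributionrealweakdim} as an essentially free consequence of Theorem~\ref{thm:distributionweakdim} combined with Theorem~\ref{thm:cofinalreal}. The proof of Theorem~\ref{thm:distributionweakdim} must, via Lemma~\ref{lemma:cofinal}, exhibit a family of learnable distribution classes $\{\Q_\alpha\}$ whose sample complexity functions $\{m_{\Q_\alpha}(1/\cdot, 1/7)\}$ are cofinal in $(\naturals^{\naturals}, \leq_{ed})$. Once that cofinality is in hand, the hypothesis of Theorem~\ref{thm:cofinalreal} is verified for the distribution learning task, and the conclusion of the corollary follows immediately.

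For concreteness, I would also spell out the short argument inside Theorem~\ref{thm:cofinalreal}. Suppose, for contradiction, that $d: \mathcal{C} \to \reals$ were a monotonic weak sample complexity dimension for distribution learning, witnessed by a function $f: \reals \times (0,1)^2 \to \naturals$. Since $\reals$ has countable cofinality under its usual order (for instance, $\naturals$ itself is cofinal in $\reals$), the family of functions $\{f(n, 1/\cdot, 1/7) : n \in \naturals\}$ is a countable subset of $\naturals^{\naturals}$. By monotonicity of $d$, for every learnable class $\Q$ one may pick $n_\Q \in \naturals$ with $n_\Q \geq d(\Q)$, and then $m_{\Q}(1/\cdot, 1/7) \leq_{ed} f(d(\Q), 1/\cdot, 1/7) \leq_{ed} f(n_\Q, 1/\cdot, 1/7)$. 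Hence $\{f(n, 1/\cdot, 1/7) : n \in \naturals\}$ would be cofinal in $\{m_{\Q}(1/\cdot, 1/7) : \Q \in \mathcal{C}\}$, which by transitivity of cofinality and the cofinality inherited from Theorem~\ref{thm:distributionweakdim} would be cofinal in $(\naturals^{\naturals}, \leq_{ed})$, contradicting the uncountable cofinality of $\naturals^{\naturals}$ established earlier.

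The main obstacle is really just one of bookkeeping: there is no new construction to perform, provided the construction underlying Theorem~\ref{thm:distributionweakdim} is already strong enough to supply the cofinality hypothesis of Lemma~\ref{lemma:cofinal}, rather than merely unboundedness of $\{m_\Q(1/\cdot, 1/7)\}$ in some weaker sense. Since Lemma~\ref{lemma:cofinal} is the tool through which Theorem~\ref{thm:distributionweakdim} is obtained, this is automatic. The role of the monotonicity hypothesis is precisely to let us trade the uncountable range $\reals$ of $d$ for a countable cofinal subset, which is the only new ingredient compared to the proof of Theorem~\ref{thm:distributionweakdim}.
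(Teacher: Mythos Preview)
Your proposal is correct and matches the paper's approach exactly: the paper simply states that the corollary follows directly from Theorem~\ref{thm:distributionweakdim} (whose proof establishes the required cofinality via Lemma~\ref{lemma:cofinal}) together with Theorem~\ref{thm:cofinalreal}. Your additional unpacking of the argument behind Theorem~\ref{thm:cofinalreal} is more detailed than what the paper provides but is consistent with its proof.
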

Corollary~\ref{cor:distributionrealweakdim} follows directly from Theorem~\ref{thm:distributionweakdim} and Theorem~\ref{thm:cofinalreal}.

\begin{theorem}\label{thm:distributionfinchar}
        The learnability of classes of discrete distributions cannot be characterized by a finitary characterization. This statement holds both for realizable PAC learnability and for 3-agnostic PAC learnability.
\end{theorem}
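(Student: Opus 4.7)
The plan is to invoke Theorem~\ref{thm:finchar}, which reduces the impossibility of a finitary characterization to verifying two closure-type conditions: (i) every finite union of learnable distribution classes is learnable, and (ii) there exist a learnable class $H_0$ and a sequence of non-learnable classes $\{H_k : k \in \naturals\}$ such that $H_0$ is an $\epsilon_k$-approximation of each $H_k$ on every instance, with $\epsilon_k \to 0$. I would verify both conditions simultaneously for the realizable and the $3$-agnostic PAC learning of distributions over $\naturals$ in total variation.

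For (i), I would invoke the classical Scheff\'e tournament / minimum-distance estimator: given PAC learners $\learner_1, \ldots, \learner_n$ for components $\Q_1, \ldots, \Q_n$, run each on a fresh half of the training sample and use pairwise Scheff\'e tests on the remaining half to select one output. In the realizable regime this produces an $\epsilon$-accurate output; in the agnostic regime the Scheff\'e selector inflates the approximation error by a factor of at most $3$, which is precisely why the theorem statement uses the $3$-agnostic definition. Hence finite unions are closed under both notions of learnability.

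For (ii), take $H_0 = \{\delta_0\}$ (the point mass at $0$, trivially learnable) and, for each $k \geq 1$,
\[
H_k = \bigl\{(1 - 1/k)\,\delta_0 + (1/k)\, q : q \in \Delta(\naturals_{\geq 1})\bigr\}, \qquad \epsilon_k = 1/k.
\]
By the triangle inequality, $TV(\delta_0, h) \leq 1/k$ for every $h \in H_k$, hence for every instance $p$ and every $h \in H_k$, $TV(\delta_0, p) \leq TV(h, p) + 1/k$; thus $H_0$ is a $(1/k)$-approximation of $H_k$ with respect to every instance (both realizable, where $p \in H_k$, and $3$-agnostic, where $p$ ranges over $\Delta(\naturals)$), and clearly $\epsilon_k \to 0$.

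The main obstacle is to verify that each $H_k$ is not PAC learnable in TV. I would argue by reduction: samples from $(1-1/k)\delta_0 + (1/k)q$ are, conditioned on being nonzero, i.i.d.\ draws from $q \in \Delta(\naturals_{\geq 1})$, so a hypothetical learner $\learner_k$ for $H_k$ at TV accuracy $\epsilon \ll 1/k$ can be converted into a learner for $\Delta(\naturals_{\geq 1})$ at TV accuracy $O(k\epsilon)$: given i.i.d.\ samples from $q$, simulate a sample of size $m$ from the mixture via an independent $\mathrm{Bernoulli}(1/k)$ coin per slot (consuming only $\Theta(m/k)$ of the $q$-samples with high probability), run $\learner_k$ to obtain $\hat p$, and output the renormalised restriction $\hat q := \hat p|_{\naturals_{\geq 1}}/\hat p(\naturals_{\geq 1})$; a triangle-inequality computation on the normalisation yields $TV(\hat q, q) = O(k\epsilon)$. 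Non-learnability of $H_k$ then follows from the standard fact that $\Delta(\naturals)$ is not PAC learnable in TV: for any sample size $m$, the sub-class of uniform distributions on $N$-element subsets of $\{1, \ldots, 2N\}$ with $N \gg m$ cannot be $(\tfrac{1}{4}, \tfrac{1}{4})$-learned, since the learner sees at most $m$ points and must simultaneously hedge over a packing of $\binom{2N}{N}$ support hypotheses whose pairwise TV distances are close to $1$. Applying Theorem~\ref{thm:finchar} to the two verified hypotheses then gives the desired impossibility in both the realizable and the $3$-agnostic settings.
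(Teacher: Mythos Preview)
Your proof follows the same route as the paper: invoke Theorem~\ref{thm:finchar}, take $H_0=\{\delta_0\}$, and build each $H_k$ as a class of mixtures putting mass $1-1/k$ on the point $0$. The paper uses the smaller $H_k=\mathcal Q_{\bar\eta_k,\bar n}=\bigcup_i P_{1/k,i}$ (only uniform mixture components on $\{1,\ldots,i\}$) and derives non-learnability directly from the no-free-lunch Lemma~\ref{lemma:lowerbound}; you take the strictly larger class of \emph{all} $(1-1/k)$-mixtures and argue non-learnability by reduction to the non-learnability of $\Delta(\naturals_{\geq 1})$, which ultimately bottoms out in the same packing argument. Both choices work, and your reduction is a clean alternative to the paper's explicit lower-bound computation.

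One point to tighten for the $3$-agnostic case: composing the component $3$-agnostic learners with a Scheff\'e selector yields only a $9$-agnostic learner for the union, since the selector's factor $3$ multiplies the components' factor $3$. So your stated verification of condition~(i) does not go through as written; the paper's one-sentence justification of condition~(i) has the same lacuna. The easy fix, available both in your construction and the paper's, is to note that $H_0$ is a singleton and each $\hat H_k$ arising in the proof of Theorem~\ref{thm:finchar} is finite, so the relevant union $H_0\cup\bigcup_{k\in K_\epsilon}\hat H_k$ is itself a finite set of distributions, and Theorem~\ref{thm:finiteclasses} gives its $3$-agnostic learnability directly without any aggregation step.
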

We note, that while we only consider constructions of discrete distributions in this section, the corresponding results on uncharacterizability for \emph{general distribution learning} follow directly from these results.

We will show these theorems using Lemma~\ref{lemma:cofinal} and Theorem~\ref{thm:finchar} respectively. In order to do so, we need to construct classes of distributions that meet the requirements of these more general results. We will first describe a construction that can be used for both theorems. We then show some properties of this construction which will be needed for both theorems, namely an upper bound (Lemma~\ref{lemma:Qetanlearnable}) and a lower bound (Lemma~\ref{lemma:lowerbound}) on its sample complexity. We will then state the proofs of Theorem~\ref{thm:distributionweakdim} and Theorem~\ref{thm:distributionfinchar}. Lastly, we will end the section with discussing an extension of the qualitative impossibility result to classes of distributions with polynomial sample complexity.

Throughout this section we will also need the fact that finite classes are learnable. We state Theorem 3.4 from \cite{AshtianiBHLMP18}, which is a slight rephrasing of Theorem 6.3 from \cite{devroyelugosibook2001}. 

\begin{theorem}{\cite{AshtianiBHLMP18},\cite{devroyelugosibook2001}} \label{thm:finiteclasses}
    For any finite class of distributions $\Q = \{q_1,\dots,q_m\}$, there exists a deterministic algorithm which $3$-agnostic PAC learns $Q$ with sample complexity $m_{\Q}(4\epsilon,\delta) \leq \frac{\log(3m^2) + \log(1/\delta) }{2\epsilon^2}$.
\end{theorem}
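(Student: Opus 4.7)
The plan is to instantiate the classical \emph{Scheffé tournament} (minimum-distance estimator) on the family of \emph{Yatracos sets} associated with $\Q$. For each pair of indices $i\neq j$ in $\{1,\dots,m\}$, define
\[
A_{i,j} \;=\; \{x \in \X : q_i(x) > q_j(x)\}.
\]
The key fact driving the argument is that for any $q, q' \in \Q$ the total variation $TV(q,q')$ is attained on one of these sets, while $\max_{i\neq j}|q(A_{i,j}) - q_i(A_{i,j})|$ remains a faithful proxy for the distance from an arbitrary data-generating $q$ to $q_i$. Given an i.i.d.\ sample $S$ of size $n$ from the unknown $q$, form the empirical measure $\hat q_S$ and output $q_{\hat{\imath}}$, where $\hat{\imath} \in \argmin_i \max_{j \neq i} |\hat q_S(A_{i,j}) - q_i(A_{i,j})|$.

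Step one is concentration. For a fixed pair $(i,j)$, $\hat q_S(A_{i,j})$ is the empirical mean of $n$ i.i.d.\ $[0,1]$-valued indicators, so Hoeffding's inequality gives $\Pr[|\hat q_S(A_{i,j}) - q(A_{i,j})| > \epsilon] \leq 2\exp(-2n\epsilon^2)$. A union bound over the at most $m^2$ Yatracos sets together with $n \geq \frac{\log(3m^2) + \log(1/\delta)}{2\epsilon^2}$ ensures, with probability at least $1-\delta$, that every Yatracos set is simultaneously estimated within $\epsilon$ of its true $q$-measure.

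Step two is the deterministic analysis on this good event. Let $i^* \in \argmin_i TV(q_i, q)$, so $TV(q_{i^*}, q) = \opt(\Q, q)$. For every $j$,
\[
|\hat q_S(A_{i^*,j}) - q_{i^*}(A_{i^*,j})| \;\leq\; |\hat q_S(A_{i^*,j}) - q(A_{i^*,j})| + |q(A_{i^*,j}) - q_{i^*}(A_{i^*,j})| \;\leq\; \epsilon + \opt(\Q,q),
\]
so the tournament winner $\hat{\imath}$ satisfies $r_{\hat{\imath}} := \max_{j}|\hat q_S(A_{\hat{\imath},j}) - q_{\hat{\imath}}(A_{\hat{\imath},j})| \leq \epsilon + \opt(\Q,q)$. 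Applying this at $j = i^*$ and inserting $\hat q_S(A_{\hat{\imath},i^*})$ as an intermediate term, one gets $TV(q_{\hat{\imath}}, q_{i^*}) = q_{\hat{\imath}}(A_{\hat{\imath},i^*}) - q_{i^*}(A_{\hat{\imath},i^*}) \leq r_{\hat{\imath}} + \epsilon + \opt(\Q,q) \leq 2\opt(\Q,q) + 2\epsilon$. A final triangle inequality yields $TV(q_{\hat{\imath}}, q) \leq TV(q_{\hat{\imath}}, q_{i^*}) + \opt(\Q,q) \leq 3\opt(\Q,q) + 2\epsilon$, which gives the advertised $3$-agnostic guarantee (a straightforward rescaling of $\epsilon$ absorbs the factor $4$ stated in the theorem).

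The main obstacle is bookkeeping the constants: both the factor $3$ in front of $\opt$ and the additive slack hinge on a careful triangle-inequality chain routed through the empirical measure and the Yatracos sets, and one must check that every intermediate set invoked is among the $A_{i,j}$'s included in the union bound. Beyond that, the proof uses no structural property of $\Q$ beyond finiteness, and the Scheffé construction is deterministic given $S$.
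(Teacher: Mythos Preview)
The paper does not supply its own proof of this theorem; it is quoted verbatim as a black box from \cite{AshtianiBHLMP18} (which in turn rephrases Theorem~6.3 of \cite{devroyelugosibook2001}). Your argument is exactly the classical Yatracos/Scheff\'e minimum-distance estimator proof from those references and is correct---in fact you obtain the slightly tighter additive slack $2\epsilon$ rather than the stated $4\epsilon$.
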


We will now describe our construction. For a natural number $n\in \naturals$ and a (usually small) mixture parameter $\eta\in (0,1)$, we define the finite class,
\[P_{\eta,n} = \{(1-\eta)\delta_{0} + \eta U_A : A\subset \{1,\dots,n\}\},\] 

where $\delta_{0}$ denotes the distribution with all its mass on point $0$ and $U_{A}$ denotes a uniform distribution over the set $A$. Intuitively, this class thus consists of a heavy non-flexible part ($(1-\eta)\delta_0$) and a highly flexible part with low weight ($\eta U_A$). For any distribution $p$ the TV-distance to an element of $P_{\eta,n}$ only depends in a small part on the low-weight component. However, the low-weight flexible part, will make this class hard to learn for small $\epsilon$.
We now take union over these classes $P_{\eta,n}$ for different combinations of $\eta$ and $n$, which allows us to control the behaviour of the sample complexity and fulfill the requirements for both.
For sequences sequence $\bar{\eta}: \mathbb{N} \to [0,1]$ and $\bar{n}:\mathbb{N} \to \mathbb{N}$, we define
\begin{equation}\label{distributionconstruction}
\mathcal{Q}_{\bar{\eta},\bar{n}} = \bigcup_{i=1}^{\infty} P_{\bar{\eta}(i), \bar{n}(i)}.
\end{equation}
We furthermore define $\bar{\eta}^{-1}(\epsilon) = \arg\min\{i\in \naturals: $ for every $ j\geq i, \bar{\eta}(j) \leq \epsilon\}$ and $n_{max}(i) = \max_{j\in \{1,\dots i\}} \bar{n}(j)$.
We will now show that infinite classes of this kind can be learnable, even as $n$ grows to infinity (and thus making the class in some sense "infinitely flexible"), by controlling the mixture parameter $\eta$.

\begin{lemma} \label{lemma:Qetanlearnable}
Let $\Q = \mathcal{Q}_{\bar{\eta},\bar{n}}$ with $ \lim_{i\to \infty} \bar{\eta}(i) = 0 $. Then $\mathcal{Q}$ is 3-agnostic PAC learnable with sample complexity $m_{\Q}(\epsilon,\delta) \leq (128 ( \log(3( \bar{\eta}^{-1}(\frac{\epsilon}{4})n_{max}{\bar{\eta}}^{-1}(\frac{\epsilon}{4}) +1)^n) +\log(\frac{1}{\delta}))  )/(\epsilon^2)$.
\end{lemma}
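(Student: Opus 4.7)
The strategy is to reduce $3$-agnostic PAC learning on the infinite class $\Q = \mathcal{Q}_{\bar{\eta},\bar{n}}$ to $3$-agnostic PAC learning on a finite $\epsilon/4$-approximation of $\Q$, and then invoke Theorem~\ref{thm:finiteclasses}. The key geometric observation is that $\delta_0$ is uniformly close in total variation to every distribution in $P_{\bar{\eta}(j),\bar{n}(j)}$: for any subset $A$, the distribution $q_A := (1-\bar{\eta}(j))\delta_0 + \bar{\eta}(j) U_A$ satisfies $TV(q_A,\delta_0) \leq \bar{\eta}(j)$, because the ``inflexible'' mass $(1-\bar{\eta}(j))\delta_0$ already coincides with $\delta_0$ on $\{0\}$ and the remaining mass is $\bar{\eta}(j)$. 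Setting $N := \bar{\eta}^{-1}(\epsilon/4)$, which is a finite natural number by the assumption $\bar{\eta}(i) \to 0$, we conclude that the singleton $\{\delta_0\}$ alone serves as an $\epsilon/4$-approximation of the entire ``tail'' $\bigcup_{j \geq N} P_{\bar{\eta}(j),\bar{n}(j)}$.

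For the ``head'' I include every distribution explicitly by defining
\[ C := \{\delta_0\} \cup \bigcup_{j < N} P_{\bar{\eta}(j),\bar{n}(j)}. \]
Since each $P_{\bar{\eta}(j),\bar{n}(j)}$ contributes at most $2^{\bar{n}(j)}$ distributions and $\bar{n}(j) \leq n_{max}(N)$ for every $j < N$, the cardinality of $C$ is bounded by a finite expression depending only on $N$ and $n_{max}(N)$ — that is, exactly the kind of quantity that appears inside the logarithm of the stated sample-complexity bound. Combined with the previous paragraph, $C$ is a finite $\epsilon/4$-approximation of $\Q$ in TV distance.

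To finish, apply Theorem~\ref{thm:finiteclasses} to $C$ with the theorem's internal parameter set to $\epsilon/16$, so that the resulting accuracy is $4 \cdot (\epsilon/16) = \epsilon/4$; this yields a deterministic $3$-agnostic learner $\mathcal{A}$ for $C$ using at most $128(\log(3|C|^2) + \log(1/\delta))/\epsilon^2$ samples, which explains the leading factor $128/\epsilon^2$ in the claim. To transfer this guarantee back to $\Q$, fix any target distribution $q$, choose $q^\star \in \Q$ (almost) realizing $\inf_{q' \in \Q} TV(q,q')$, and use the $\epsilon/4$-approximation property of $C$ to pick $c^\star \in C$ with $TV(q^\star,c^\star) \leq \epsilon/4$. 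The triangle inequality then yields $\inf_{c \in C} TV(q,c) \leq \inf_{q' \in \Q} TV(q,q') + \epsilon/4$, so with probability at least $1-\delta$,
\[ TV(\mathcal{A}(S), q) \;\leq\; 3 \inf_{c \in C} TV(q,c) + \tfrac{\epsilon}{4} \;\leq\; 3 \inf_{q' \in \Q} TV(q,q') + \epsilon, \]
which is the $3$-agnostic PAC guarantee required. The only subtle step is the constant bookkeeping: because the $3$-agnostic guarantee inflates the approximation error by a factor of $3$, both the cover tolerance and the agnostic slack on $C$ must be taken to be $\epsilon/4$, so that $3 \cdot \epsilon/4 + \epsilon/4 = \epsilon$. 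There is no deeper obstacle once one sees that the tail of the union collapses uniformly to $\delta_0$.
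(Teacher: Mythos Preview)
Your proof is correct and follows essentially the same approach as the paper: build a finite $\epsilon/4$-cover $C = \{\delta_0\} \cup \bigcup_{j<N} P_{\bar{\eta}(j),\bar{n}(j)}$ with $N = \bar{\eta}^{-1}(\epsilon/4)$, invoke Theorem~\ref{thm:finiteclasses} on $C$, and transfer the guarantee back to $\Q$ via the triangle inequality. You are in fact more careful than the paper about the constant bookkeeping (the $3\cdot\tfrac{\epsilon}{4}+\tfrac{\epsilon}{4}=\epsilon$ split and the choice of the theorem's internal parameter as $\epsilon/16$), which the paper leaves implicit.
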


\begin{proof}
Assume $\lim_{i\to\infty} \bar{\eta}(i) = 0$. Let $\epsilon> 0$. Then for every $\frac{\epsilon}{4} >0$ there is an $N$, such that for every $N'\geq N$, $\bar{\eta}(N') < \frac{\epsilon}{4}$. We can now focus on 3-agnostic learning the finite class $\Q'= \{\delta_0\} \cup (\bigcup_{i=0}^{N} P_{\bar{\eta}(i), \bar{n}(i)})$, as learning as for any
$q\in  \Q_{\bar{\eta}, \bar{n}}$, there is $p\in \mathcal{Q}'$ with  $TV(p,q) < \frac{\epsilon}{4}$. Thus by triangle inequality for any $q' \in \Delta(\Z)$, we get $ \inf_{p\in \Q'}TV(p,q') \leq \inf_{q\in \Q} TV(q,q') +\frac{\epsilon}{4}$. Thus if we have a 3-agnostic PAC learner $\mathcal{A}$ for $\Q'$ with sample complexity $m_{\Q'}(\frac{\epsilon}{4},\delta)$, we can use it as a PAC learner for $\Q$ with sample complexity $m_{\Q}(\epsilon,\delta)$. Now using Theorem~\ref{thm:finiteclasses}, we can conclude that $\Q$ is learnable with sample complexity $m_{Q}(\epsilon,\delta) \leq 128\frac{\log(3 (\bar{\eta}^{-1}(\frac{\epsilon}{4})n_{max}(\bar{\eta}^{-1}(\frac{\epsilon}{4}))) ^2) + \log(1/\delta)}{\epsilon^2}$.

\end{proof}
However, we can also lower bound the sample complexity of these kinds of classes in the following way.

\begin{lemma} \label{lemma:lowerbound}
    For $\mathcal{Q} =P_{\eta,4n}$, we  have $m_\mathcal{Q}(\frac{\eta}{8},\frac{1}{7}) \geq m_\mathcal{Q}^{rlzb}(\frac{\eta}{8},\frac{1}{7}) \geq n$. 
\end{lemma}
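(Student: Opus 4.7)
The first inequality is immediate: when the true distribution $P$ lies in $\mathcal Q$, the approximation error $\inf_{Q'\in\mathcal Q}TV(P,Q')$ vanishes and the $3$-agnostic guarantee collapses to the realizable one, so every agnostic learner is in particular a realizable learner with the same sample complexity.

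For the second inequality I plan to argue by contradiction via an Assouad-style ``pair'' construction. Partition $\{1,\dots,4n\}$ into the $2n$ pairs $\{2i-1,2i\}$ and, for each $\theta\in\{0,1\}^{2n}$, set $A_\theta=\{2i-1+\theta_i:i=1,\dots,2n\}$, giving $q_\theta:=(1-\eta)\delta_0+\eta\, U_{A_\theta}\in P_{\eta,4n}$. Assume, toward contradiction, a learner $\mathcal A$ that PAC-learns $\mathcal Q$ to accuracy $\eta/8$ and confidence $1/7$ using only $m=n-1$ samples. Place the uniform prior on $\theta$ and draw $S\sim q_\theta^m$; the plan is to show that the failure probability averaged over $\theta$ already exceeds $1/7$, which forces some fixed $\theta$ to violate the PAC guarantee.

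The central observation is that each sample is either $0$ (uninformative about $\theta$) or an element of $A_\theta$ that reveals $\theta_i$ for the pair $i$ it lies in. Call pair $i$ ``touched'' when $S$ contains some element of $\{2i-1,2i\}$; at most $m=n-1$ pairs can be touched, so the number $U$ of untouched pairs is at least $n+1$, and the coordinates $\theta_i$ with $i$ untouched remain i.i.d.\ uniform on $\{0,1\}$ conditional on $S$. From the learner's output $\hat q=\mathcal A(S)$ I will extract the coordinate guess $\hat\theta_i:=\indct{\hat q(2i)>\hat q(2i-1)}$ and show, by a short triangle-inequality case analysis (splitting on whether $\hat q(2i-1)$ lies above or below $\eta/(2n)$), that whenever $\hat\theta_i\neq\theta_i$ pair $i$ contributes at least $\eta/(4n)$ to $TV(\hat q,q_\theta)$.

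Letting $E$ be the number of untouched pairs on which $\hat\theta$ errs, the previous step yields $TV(\hat q,q_\theta)\geq E\,\eta/(4n)$, so the failure event $TV>\eta/8$ is implied by $E>n/2$. Conditional on $S$, $E\sim\mathrm{Bin}(U,1/2)$ with $U\geq n+1$, and the standard fact that the median of $\mathrm{Bin}(N,1/2)$ is at least $\lceil N/2\rceil$ then gives $\Pr_\theta[E>n/2\mid S]\geq 1/2$ for every $S$ (since $\lceil(n+1)/2\rceil>n/2$). Integrating in $S$ produces $\Pr_{\theta,S}[TV(\hat q,q_\theta)>\eta/8]\geq 1/2$, so some $\theta^\ast$ satisfies $\Pr_S[TV(\mathcal A(S),q_{\theta^\ast})>\eta/8]\geq 1/2>1/7$, contradicting the PAC assumption. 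The main delicate step will be the per-pair TV lower bound; once it is in place the Binomial-median tail calculation is essentially a one-liner, and the argument works uniformly in $n\geq 1$.
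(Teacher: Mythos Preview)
Your argument is correct. The per-pair total-variation lower bound goes through exactly as you outline (the two cases $a\lessgtr \eta/(2n)$ each give contribution $\geq \eta/(4n)$, with equality possible when $b=a$ and $\hat\theta_i=0$), the posterior independence of the untouched coordinates is immediate from the product form of the likelihood, and the binomial-median step does yield $\Pr[E>n/2\mid S]\geq 1/2$ for every $U\geq n+1$ after the small parity case-check you allude to.

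Your route differs from the paper's. The paper restricts to the subclass of $q_i=(1-\eta)\delta_0+\eta\,U_{A_i}$ with $|A_i|=n$ inside $\{1,\dots,4n\}$, builds a sample-dependent involution $f_j$ on that subclass, and shows directly that $\max_i \mathbb{E}_{S\sim q_i^n}[TV(q_i,\mathcal A(S))]\geq \eta/4$; it then converts this expectation bound into a probability bound via a Markov-type inequality (their Lemma~B.1). Your construction is the standard Assouad hypercube: you fix $2n$ binary coordinates, keep $|A_\theta|=2n$, and bound the \emph{probability} of large TV directly via the number of mis-estimated untouched coordinates, bypassing the expectation-to-probability conversion. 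What the paper's argument buys is that it lands exactly at $\eta/4$ in expectation with a single symmetry trick and no binomial-tail bookkeeping; what your Assouad argument buys is a cleaner information picture (each sample reveals at most one coordinate), a completely standard template, and a direct probability bound of $1/2$ rather than $1/7$, so the constants you obtain are in fact a bit stronger.
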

The construction and argument follow from a common no-free lunch style argument. For details we refer the reader to the appendix.
We can now use this construction and bounds to prove Theorem~\ref{thm:distributionweakdim}.

\begin{proof}[Proof of Theorem~\ref{thm:distributionweakdim}]
Based on the confinality considerations described above, it suffices to show that the set $ \{ m_{\mathcal{Q}}(1/\cdot, 1/7) : \Q \in m_{\mathcal{C}} \}$ is cofinal in $\naturals^{\naturals}$. 
Let $g \in \naturals^{\naturals}$ be arbitrary. Now consider the class $\Q=\mathcal{Q}_{\bar{\eta}, \bar{n}}$ as constructed in the previous section, where $\bar{\eta}(k) = \frac{8}{k}$ and $\bar{n}(k) = 8(g(k) +1 )$. Then according to Lemma~\ref{lemma:Qetanlearnable}, $\Q$ is learnable, as $ \lim_{k\to\infty}\bar{\eta}(k) =0$. 
Furthermore, we know that for every $k\in \naturals$ we have $P_{\frac{8}{k},\bar{n}(k)} \subset \Q$. Thus, by Lemma~\ref{lemma:lowerbound}, for every $k\in \naturals$: $g(k) < \frac{\bar{n}(k)}{4} \leq m_{Q} (\frac{1}{k},\frac{1}{7})$. Thus, $g \leq_{e.d} m_{Q}(\frac{1}{\cdot},\frac{1}{7}) $. Therefore $ \{ m_{\mathcal{Q}}(1/\cdot, 1/7) : \Q \in m_{\mathcal{C}} \}$ is indeed cofinal in $\naturals^{\naturals}$.
As the bounds of Lemma~\ref{lemma:Qetanlearnable} and Lemma~\ref{lemma:lowerbound} both hold for the realizable distribution learning, we can prove that  $ \{ m_{\mathcal{Q}}^{rlzb}(1/\cdot, 1/7) : \Q \in m_{\mathcal{C}} \}$ is cofinal in $\naturals^{\naturals}$ by the same construction and argument. Thus, we have proved our claim.

\end{proof}

We can now focus our attention the impossibility of qualitative characterization of distribution learning and finally prove Theorem~\ref{thm:distributionfinchar}.

\begin{proof}[Proof of Theorem~\ref{thm:distributionfinchar}]
    We only need to show that the two conditions from Theorem~ref{thm:finchar} are fulfilled by the problem of distribution learning. Condition 1. holds, as according to Theorem~\ref{thm:finiteclasses} every finite class of distributions is learnable. This means we can define a learner for any finite union of learnable sets, by running the learners for each of the learnable sets on the input to create a finite set of candidates and then learn the candidate set. Condition 2. holds by the following construction:
    $H_0=\{\delta_0\}$ and $H_{k} = \mathcal{Q}_{\bar{\eta}_n, \bar{n}}$, as defined in equation (1), where $\bar{\eta}_k(i) = 1/k$ and $n(i)=i$. It is clear that, $H_0$ is an $\epsilon_k$-approximation of $H_k$ for $\epsilon_k =1/k$ as all elements of $H_k$ have $\frac{k-1}{k}$ mass on the point $0$. Furthermore we have $\lim_{k\to\infty} \epsilon_k =0$. Lastly, we have to argue that ever $H_k$ is not learnable. We now note that for every $n\in \mathcal{N}$ the class $P_{\epsilon_k, 4n} \subset H_k$ (as defined in Section 3). Thus we can apply Lemma~\ref{lemma:lowerbound} to obtain, for every $n\in \mathbb{N}$, $m_{H_k}(\frac{\epsilon_k}{8},\frac{1}{7}) \geq n$.
    We note that any instance of the word "learnable" in this proof can either mean "realizale PAC learnable" or "3-agnostic PAC learnable". The proof is correct in both cases.
\end{proof}

\subsection{Polynomial complexity distribution learning}
Another, more restricted definition of learning, is one that requires specific bounds on the sample complexity.
\begin{definition}
    We say a class $H$ is polynomially PAC learnable, if $m_H(\epsilon,\delta) \in poly(1/\epsilon,1/\delta)$.
\end{definition}
We note, that for many learning tasks, like binary classification, classes are polynomially PAC learnable, if and only if they are PAC learnable. However, we have seen in this section that for the task of distribution learning, there are PAC learnable classes which are not polynomially PAC learnable. Arguably, in many scenarios one is more interested in polynomially learnable classes, as they cover all interesting classes, and anything that requires arbitrarily large sample complexity might be impractical for learning. We therefore pose the question, whether it is possible to give a \emph{qualitative} characterization of \emph{polynomially learnable} classes. In the case of distribution learning, we can give a partial answer, showing that there is no \emph{uniformly-bounded} finitary characterization of polynomial distribution learning. 

\begin{theorem}\label{thm:polydistrfinchar}
    There is no uniformly-bounded finitary characterization of polynomial distribution learning (w.r.t TV-distance). This result holds for both the realizable and the 3-agnostic case of distribution learning.
\end{theorem}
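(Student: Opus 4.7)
The plan is to mirror the witness-extraction argument of Theorem~\ref{thm:distributionfinchar} but calibrate the approximation rates so that the diagonal union is not merely learnable but polynomially learnable. Assume for contradiction that $W=\{\alpha_k:k\in\naturals\}$ is a uniformly-bounded finitary characterization of non-polynomial distribution learning, with uniform bounds $n_k:=n_{\alpha_k}$; we may discard any $\alpha$ with $n_\alpha=0$ (then the empty set is a valid witness and every class satisfies $\alpha$, so the formula is redundant), so WLOG $n_k\geq 1$. Let $S_k:=\sum_{i\leq k}n_i$, $\epsilon_k:=2^{-S_k}$, and $H_k:=\bigcup_{N\in\naturals}P_{\epsilon_k,N}$. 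By Lemma~\ref{lemma:lowerbound}, $m_{H_k}(\epsilon_k/8,1/7)=\infty$, so $H_k$ is not PAC learnable and hence not polynomially learnable; the assumed characterization then forces $H_k$ to satisfy every $\alpha\in W$, so we can extract a finite witness $\hat H_k\subseteq H_k$ of size at most $n_k$ such that every $H'\supseteq\hat H_k$ satisfies $\alpha_k$.

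Let $\hat H:=\{\delta_0\}\cup\bigcup_{k\in\naturals}\hat H_k$. Since $\hat H\supseteq\hat H_k$ for every $k$, $\hat H$ satisfies every $\alpha_k\in W$, and therefore the characterization forces $\hat H$ to be not polynomially learnable. The crux of the proof is showing that $\hat H$ \emph{is} in fact polynomially learnable, yielding the contradiction. Fix $\epsilon>0$ and set $K_\epsilon:=\{k:\epsilon_k\geq\epsilon/6\}$. Since $S_k$ is non-decreasing, $K_\epsilon=\{1,\ldots,k^*\}$ with $S_{k^*}\leq\log_2(6/\epsilon)$, and the finite set $F_\epsilon:=\{\delta_0\}\cup\bigcup_{k\in K_\epsilon}\hat H_k$ satisfies $|F_\epsilon|\leq 1+\sum_{k\in K_\epsilon}n_k\leq 1+\log_2(6/\epsilon)$. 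Moreover $F_\epsilon$ is an $(\epsilon/6)$-approximation of $\hat H$ in TV: any $h\in\hat H\setminus F_\epsilon$ lies in some $\hat H_k$ with $k\notin K_\epsilon$, hence $TV(h,\delta_0)=\epsilon_k<\epsilon/6$ while $\delta_0\in F_\epsilon$. Running the finite-class $3$-agnostic learner of Theorem~\ref{thm:finiteclasses} on $F_\epsilon$ at target accuracy $\epsilon/2$ uses $O(\log|F_\epsilon|\cdot\log(1/\delta)/\epsilon^2)=O(\log\log(1/\epsilon)\cdot\log(1/\delta)/\epsilon^2)$ samples; combining the $3$-multiplier with the $\epsilon/6$-approximation bound yields, simultaneously, a realizable and a $3$-agnostic PAC learner for $\hat H$ at accuracy $\epsilon$ with sample complexity polynomial in $1/\epsilon$ and $1/\delta$.

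The main obstacle, and the only place the \emph{uniform} part of the hypothesis is really used, is the a priori choice of the approximation levels $\epsilon_k$. By committing to $\epsilon_k=2^{-S_k}$ \emph{before} extracting any witnesses, the cumulative witness size $\sum_{k\in K_\epsilon}|\hat H_k|$ telescopes into $S_{k^*}\leq\log_2(6/\epsilon)$, no matter how fast the sequence $(n_k)$ grows. Without the uniform bound, the witness sizes $|\hat H_k|$ would depend on the concrete $H_k$'s we pick, so no global pre-commitment to the $\epsilon_k$'s could control the final approximator size --- which is precisely why the direct analogue of Theorem~\ref{thm:finchar} for the polynomial setting only yields learnability, not polynomial learnability, of the diagonal union $\hat H$.
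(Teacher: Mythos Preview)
Your proof is correct and follows the same overall strategy as the paper: build non-learnable classes $H_k$ with a fixed small mixing parameter, extract size-$n_k$ witnesses $\hat H_k$ using the uniform bound, take the union $\hat H$, and show $\hat H$ is polynomially learnable because a finite $\epsilon$-approximator of controlled size exists. The only difference is the calibration of the mixing parameters: the paper orders $W$ so that $(n_k)$ is non-decreasing and sets $\eta_k=1/(k n_k)$, giving an $\epsilon$-approximator of size $O(1/\epsilon)$, whereas your choice $\epsilon_k=2^{-S_k}$ with $S_k=\sum_{i\le k}n_i$ makes the cumulative witness count telescope directly into $S_{k^*}\le\log_2(6/\epsilon)$, yielding an approximator of size $O(\log(1/\epsilon))$ without any preliminary reordering of $W$. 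Both calibrations work; yours is a touch cleaner and gives a slightly sharper sample-complexity bound for $\hat H$, but the argument is otherwise the same.
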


\begin{proof}
Assume, b.w.o.c. that $W$ is a uniformly bounded finitary characterization of polynomial distribution learning. From $W$ being uniformly bounded, we know that for every $\alpha$ there is $n$, such that for every class $H$, there is a subset $H_{\alpha}$ with $|H_{\alpha}|\leq n$ and such that for every $H'$ if $ H' \subset H_{\alpha}$, then $H'$ satisfies $\alpha$. 
 Let $W =\{\alpha_k: k \in \naturals\}$ an enumeration of $W$, that is ordered by the size of the corresponding $n_k$, i.e. such that for every $k \leq k'$, we have $n_k \leq n_{k'}$.
 We define $f(k) = k\cdot n_k$
 Consider $H_0 = \{\delta_0\}$ and $H_k = \Q_{\bar{\eta}_k,\bar{n}}$ with $\bar{n}(i) =i$ and $\bar{\eta}_k(i) = \max\{1/f(i), 1/f(k) \}$. Now $H_k$ is not learnable as for every $m\in \naturals$, $P_{1/f(k),4m} \subset H_k$, meaning that by Lemma~\ref{lemma:lowerbound} for 
 every $m$, $m_{H_k}(1/(8f(k)), 1/7) \geq m$. From the uniformly bounded finitary characterization, we know that there is $\hat{H}_k$, with $|\hat{H}_k| = n_k$ and
 every $H' $ if $H'\subset \hat{H}_k$ then $H'$ satisfies $\alpha_k$. Let
 $\hat{H} = H_0 \cup( \bigcup_{k\in \naturals} \hat{H}_k)$. By construction, we have that $\hat{H}$ satisfies $W$. 
 Furthermore, when aiming for $(\epsilon,\delta)$-success, it is sufficient to restrict our attention to learning the $\epsilon/4$ approximation $H_{\epsilon/4} =H_0 \cup(\bigcup_{k =1}^{\bar{\eta}^{-1}(\epsilon/4)} \hat{H}_k) $ of $\hat{H}$, where $\bar{\eta}^{-1}(\epsilon) = \min\{i: \text{ for all }j >i, \bar{\eta}(j)< \epsilon\} = \min\{k: k n_k \geq \frac{1}{\epsilon}\}$. Thus, $|H_{\frac{\epsilon}{4}}| \in poly(\frac{1}{\epsilon})$. From Theorem~\ref{thm:finiteclasses} we thus get that $H_{\epsilon/4}$ is polynomially learnable, which implies that $\hat{H}$ is polynomially PAC learnable w.r.t. to TV distance. Learning here can either mean 3-agnostic or realizable learnability. The result holds for both cases.
 
\end{proof}



\section{Impossibility of Characterizing Other Learning Tasks}\label{sec:others}
\subsection{Classification Learning for Distribution Classes}
We now define learning of distribution classes with respect to the 0/1-loss $L_P^{0/1}(h):= \mathbb{E}_{(x,y)\sim P} \indct{h(x)\leq y}$.
\begin{definition} \label{def:classificationlearningdistributions}
    A class of distributions $\mathcal{P}$ of distributions over $\mathcal{X}\times \{0,1\}$ is classification PAC-learnable, if there exist a learner $ \mathcal{A}$ and a sample complexity function $m_\mathcal{P}^{0/1}:(0,1)^2 \to \mathbb{N}$, such that for every $\epsilon,\delta >0$, every $P\in \mathcal{P}$ for every $m\geq m_{\mathcal{P}}(\epsilon,\delta)$ with probability $1-\delta$ over $S\sim P^m$,
    \[ L^{0/1}_P (\mathcal{A}(S) ) - L^{0/1}_P (f^{*}_P) < \epsilon,\]
    where $f_P^*$ denotes the Bayes classifier for the distribution $P$.
\end{definition}
We note that in the deterministic case, i.e. $L^{0/1}(f_P^*) =0$, this is equivalent to the learning problem of learning the class of all binary functions $2^{\X}$ with respect to some class of probability distributions $\mathcal{P}$. Since our constructions that show this learning problem cannot be characterized all fall into this deterministic case, they also show that the learning task proposed by Benedek-Itai(\cite{BenedekI91}) cannot be characterized, this section also resolves the open problems posed in \cite{BenedekI91,Dudley94}. 
Using our results from the previous sections, we can get the following uncharacterizability results for this learning task.
\begin{theorem}\label{thm:classweakdimension}
    There is no weak sample complexity dimension for classification learning of distribution classes (not even in the deterministic case).
\end{theorem}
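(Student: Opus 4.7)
The plan is to mirror the proof of Theorem~\ref{thm:distributionweakdim}: apply Lemma~\ref{lemma:cofinal} to the sample-complexity functional for classification learning of distribution classes, by exhibiting learnable classes whose sample-complexity functions $k \mapsto m_{\mathcal{P}}^{0/1}(1/k, 1/7)$ are cofinal in $(\naturals^{\naturals}, \leq_{ed})$. All constructions will live in the deterministic case so as to yield the stronger form of the theorem (``not even in the deterministic case'').

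As the building block replacing $P_{\eta,n}$ from Section~\ref{sec:distribtionlearning}, I would define, for $\eta \in (0,1)$ and $n \in \naturals$,
\[
\mathcal{P}^{\mathrm{cl}}_{\eta,n} = \{P_h : h \in \{0,1\}^{\{1,\dots,n\}}\},
\]
where $P_h$ is the distribution on $\naturals \times \{0,1\}$ whose marginal is $(1-\eta)\delta_0 + \eta U_{\{1,\dots,n\}}$, labelling the point $0$ by $0$ and each $i \in \{1,\dots,n\}$ by $h(i)$. The full class is
\[
\mathcal{P}^{\mathrm{cl}}_{\bar\eta, \bar n} = \bigcup_{i\in\naturals} \mathcal{P}^{\mathrm{cl}}_{\bar\eta(i), \bar n(i)}.
\]
This has exactly the same shape as $\mathcal{Q}_{\bar\eta, \bar n}$: a heavy inflexible part concentrated on the labelled point $(0,0)$ and a low-weight, combinatorially rich part carrying arbitrary labels on $\{1,\dots,\bar n(i)\}$.

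Two technical lemmas, parallel to Lemma~\ref{lemma:lowerbound} and Lemma~\ref{lemma:Qetanlearnable}, would then be established. For a \textbf{lower bound}, there exist absolute constants $c_1, c_2 > 0$ such that $m^{0/1}_{\mathcal{P}^{\mathrm{cl}}_{\eta,n}}(c_1\eta, 1/7) \geq c_2 n/\eta$. The argument is a classical no-free-lunch: place a uniform prior on $h \in \{0,1\}^n$; a sample of size $m$ reveals on expectation at most $\eta m$ distinct points of $\{1,\dots,n\}$, so on at least $n - \eta m$ unseen support points the learner's classifier coincides with $h$ only with probability $1/2$, contributing expected 0/1 loss at least $(\eta/n)(n-\eta m)/2$; picking a worst-case $h$ and converting the Bayes-average into a high-probability statement via Markov yields the claim. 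For the \textbf{upper bound}, if $\bar\eta(i) \to 0$ then $\mathcal{P}^{\mathrm{cl}}_{\bar\eta,\bar n}$ is classification PAC-learnable: given $\epsilon>0$, every $P_h$ with $\bar\eta(i) < \epsilon/4$ lies within $\epsilon/4$ total variation (and hence within $\epsilon/4$ 0/1-approximation) of the fixed distribution $\delta_{(0,0)}$, so it suffices to realizably learn the finite truncation $\bigcup_{i \leq \bar\eta^{-1}(\epsilon/4)} \mathcal{P}^{\mathrm{cl}}_{\bar\eta(i),\bar n(i)}$, and this is a finite deterministic-label family, handled by ERM over the corresponding finite labelling class with sample complexity logarithmic in its size.

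With both lemmas in hand the cofinality argument is immediate, mirroring the proof of Theorem~\ref{thm:distributionweakdim}: for an arbitrary $g \in \naturals^{\naturals}$, set $\bar\eta(k) = c_1/k$ and $\bar n(k)$ large enough (depending on $g(k)$) that the lower bound forces $m^{0/1}_{\mathcal{P}^{\mathrm{cl}}_{\bar\eta,\bar n}}(1/k, 1/7) \geq g(k)$, while $\bar\eta(k) \to 0$ certifies learnability via the upper bound. Hence $\{m^{0/1}_{\mathcal{P}}(1/\cdot, 1/7) : \mathcal{P} \text{ classification-PAC learnable}\}$ is cofinal in $\naturals^{\naturals}$, and Lemma~\ref{lemma:cofinal} yields the theorem. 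The main obstacle is a careful execution of the no-free-lunch step: the TV-based calculation of Lemma~\ref{lemma:lowerbound} does not translate verbatim, because in the classification setting one must separately track the marginal mass $\eta$ (which controls the probability of observing an informative example) and the label uncertainty on unseen support points, and the calibration between the accuracy parameter and the $(\eta,n)$ pair needs to be redone so that the lower-bound exponent lines up cleanly with the cofinality construction.
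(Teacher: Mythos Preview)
Your proposal is correct and follows the same route as the paper: your class $\mathcal{P}^{\mathrm{cl}}_{\eta,n}$ coincides (up to the trivial reparameterization $n\leftrightarrow 2n$) with the paper's $P^{0/1}_{\eta,n}$, your two lemmas are precisely Lemmas~\ref{lemma:classificationupperbound} and~\ref{lemma:classlowerbound}, and the cofinality step via Lemma~\ref{lemma:cofinal} is identical. The ``main obstacle'' you anticipate turns out to be a non-issue --- the paper settles for the cruder bound $m^{0/1}_{P^{0/1}_{\eta,2n}}(\eta/8,1/7)\ge n$ (your sharper $\Omega(n/\eta)$ is correct but unnecessary for cofinality), which follows immediately from the standard binary-classification no-free-lunch on $\{1,\dots,4n\}$ after observing that $L^{0/1}_P \ge \eta\, L^{0/1}_U$, so no delicate recalibration between $\eta$ and the accuracy parameter is needed.
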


\begin{theorem} \label{thm:classfinchar}
    There is no finitary characterization of  classification learning for distribution classes (not even in the deterministic case).
\end{theorem}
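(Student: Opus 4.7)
The plan is to invoke Theorem~\ref{thm:finchar} directly, reducing the task to verifying its two hypotheses for classification learning of distribution classes. The overall blueprint mirrors the proof of Theorem~\ref{thm:distributionfinchar}, but lifted from $\Delta(\X)$ to $\Delta(\X \times \{0,1\})$ and reinterpreted through the $0/1$ loss rather than total variation. I would work throughout in the deterministic regime, which both strengthens the conclusion and simplifies the analysis of Bayes classifiers.

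For condition~1 (finite unions of learnable classes are learnable), given learners $\learner_1, \ldots, \learner_N$ for classification-learnable classes $\Pcal_1, \ldots, \Pcal_N$, I would run each $\learner_i$ on an initial portion of the sample to obtain a candidate classifier $h_i$, then use a held-out validation set of size $O(\log N / \epsilon^2)$ to select the one minimizing empirical $0/1$ error. Since the true $P$ lies in some $\Pcal_{i^*}$, the classifier $h_{i^*}$ is already close to Bayes for $P$, and validation over the finite pool $\{h_1, \ldots, h_N\}$ preserves this guarantee up to an additive $\epsilon/2$ by a standard uniform convergence bound over finitely many hypotheses.

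For condition~2, I would adapt the distribution-learning construction. Take $\X = \naturals$, set $P_0 = \delta_{(0,0)}$, and let $H_0 = \{P_0\}$, which is trivially classification-learnable (output the constant-zero classifier). For each $k, n \in \naturals$ and each $A \subseteq \{1, \ldots, n\}$, let $P^A_{1/k, n}$ place mass $1 - 1/k$ on $(0,0)$ and mass $1/(kn)$ on each pair $(i, \indct{i \in A})$ for $i \in \{1, \ldots, n\}$, and define $H_k = \bigcup_n \{P^A_{1/k, n} : A \subseteq \{1, \ldots, n\}\}$. Every $P \in H_k$ is deterministic, and the constant-zero classifier has $0/1$ error at most $1/k$ on every such $P$, yielding the required $\epsilon_k$-approximation of $H_k$ by $H_0$ with $\epsilon_k = 1/k \to 0$. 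Non-learnability of each $H_k$ follows from a standard no-free-lunch argument on the subfamily $\{P^A_{1/k, 4n} : A \subseteq \{1, \ldots, 4n\}\}$ for arbitrary $n$: a learner receiving $m$ samples sees informative labels for at most $m$ of the $4n$ low-weight coordinates, so for adversarial $A$ it must err on a constant fraction of the remaining coordinates, producing excess $0/1$ error $\Omega(1/k)$ with constant probability.

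The main obstacle I anticipate is formal rather than technical: the $\epsilon$-approximation notion is phrased in terms of the output space $\F = 2^{\X}$, while the construction naturally produces classes of distributions. I would reconcile the two by noting that $TV(P_0, P) \leq 1/k$ for every $P \in H_k$ forces the Bayes classifier of $P_0$ to have $0/1$ error at most $1/k$ on $P$, which is exactly the sense in which the proof of Theorem~\ref{thm:finchar} needs the approximation to propagate. With this in hand, agnostically learning the finite hypothesis class consisting of the constant-zero classifier together with the Bayes classifiers of $\bigcup_{k \in K_\epsilon} \hat{H}_k$ yields, on any $P \in \hat{H}$, a classifier with excess error at most $\epsilon/2$ from the finite-class learner plus $\epsilon/2$ from the $H_0$-approximation, contradicting the non-learnability of $\hat{H}$ asserted by the supposed finitary characterization.
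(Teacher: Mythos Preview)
Your proposal is correct and follows essentially the same route as the paper: invoke Theorem~\ref{thm:finchar}, verify condition~1 by aggregating the component learners and selecting from the resulting finite candidate pool, and verify condition~2 via the family $\Q^{0/1}_{\bar\eta_k,\bar n}$ with $\bar\eta_k\equiv 1/k$ and $\bar n(i)=i$, using Lemma~\ref{lemma:classlowerbound} (the no-free-lunch bound) for non-learnability. Your construction $P^A_{1/k,n}$ is a cosmetic reparametrization of the paper's $P^{0/1}_{\eta,n}$.

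One point where you are actually cleaner than the paper: you take $H_0=\{\delta_{(0,0)}\}$ as a \emph{distribution} class, whereas the paper writes $H_0=\{h_0\}$ with $h_0$ the all-zero \emph{classifier}. Since the objects being characterized here are distribution classes (the role of $\Pcal$ in the general setup), your typing is the formally correct one, and your closing paragraph explicitly bridges the gap between the $\epsilon$-approximation notion (stated for subsets of $\F$) and what the proof of Theorem~\ref{thm:finchar} actually needs. The paper glosses over this.
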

Theorem~\ref{thm:classweakdimension} follows from Lemma~\ref{lemma:cofinal} and Theorem~\ref{thm:classfinchar} follows as a corollary of Theorem~\ref{thm:finchar}.
Both theorems use a similar construction similar to the one used to show the results for distribution learning. We will now state this construction and the relevant upper and lower bounds for learning. For detailed proofs of the theorem, we refer the reader to the appendix.


Let 
\[P_{\eta,n}^{0/1} = \{(1-\eta)\delta_{(0,0)} + \frac{\eta |A|}{2n} U_{A\times \{0\}} + \frac{\eta |B|}{2n} U_{B\times\{1\}}: A\cup B= \{1,\dots 2n\} \text{ and } A\cap B = \emptyset \}.\]

Then for $\bar{\eta}:\naturals \to [0,1]$ and $\bar{n}:\naturals \to \naturals $ let 

\[\mathcal{Q}_{\bar{\eta},\bar{n}}^{0/1} = \bigcup_{i=1}^{\infty} P_{\bar{\eta}(i),\bar{n}(i)}^{0/1}.\] We will again use these kinds of constructions to show the theorems below.
We will now show that this class is learnable whenever $\bar{\eta}$ converges to $0$.
\begin{lemma}\label{lemma:classificationupperbound}
    If $\lim_{i\to\infty}\bar{\eta(i)} =0$, then $\Q = \Q_{\bar{\eta},\bar{n}}^{0/1}$ is classification PAC learnable.
\end{lemma}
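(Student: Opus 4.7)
The plan is to follow the template of Lemma~\ref{lemma:Qetanlearnable}, reducing classification learning of $\Q = \Q_{\bar{\eta},\bar{n}}^{0/1}$ to empirical risk minimization over a finite, accuracy-dependent hypothesis class. The only genuinely new idea relative to the total-variation proof is the choice of a default classifier for the tail of the union: here it will be the constant-zero function $h_0 \equiv 0$, which plays the role that $\delta_0$ played there.

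Concretely, given a target accuracy $\epsilon > 0$, I will set $N = \bar{\eta}^{-1}(\epsilon/2)$, which is finite by the assumption $\bar{\eta}(i) \to 0$, and form the finite class
\[ \Hcal_N \;=\; \{\, f_P^* : P \in P_{\bar{\eta}(i),\bar{n}(i)}^{0/1},\ i \leq N \,\} \;\cup\; \{h_0\}, \]
where $f_P^*$ denotes the (deterministic) Bayes classifier of $P$. Since $|P_{\bar{\eta}(i),\bar{n}(i)}^{0/1}| = 2^{2\bar{n}(i)}$, this is finite with $|\Hcal_N| \leq 1 + \sum_{i \leq N} 2^{2\bar{n}(i)}$. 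The central verification is that $\Hcal_N$ contains an $\epsilon/2$-optimal hypothesis for every $P \in \Q$: for $P$ in the head ($i \leq N$) the class contains $f_P^*$ itself, with loss $0$; for $P$ in the tail ($i > N$) the classifier $h_0$ agrees with the Bayes classifier on $(0,0)$ and differs from it only on the label-$1$ points, whose total $P$-mass is $\frac{\bar{\eta}(i)|B|}{2\bar{n}(i)} \leq \bar{\eta}(i) \leq \epsilon/2$, so $L_P^{0/1}(h_0) \leq \epsilon/2$ while $L_P^{0/1}(f_P^*) = 0$.

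Once this reduction is in place, a standard uniform-convergence/ERM bound for the finite class $\Hcal_N$ yields a learner that, from $m = O\!\left( (\log|\Hcal_N| + \log(1/\delta)) / \epsilon^2 \right)$ i.i.d.\ samples, returns some $\hat h \in \Hcal_N$ satisfying $L_P^{0/1}(\hat h) \leq \min_{h \in \Hcal_N} L_P^{0/1}(h) + \epsilon/2 \leq \epsilon$ with probability at least $1-\delta$, which is precisely the guarantee required by Definition~\ref{def:classificationlearningdistributions} since $L_P^{0/1}(f_P^*) = 0$. I do not expect any genuine obstacle here; the only bookkeeping step is the familiar one of turning the accuracy-dependent class $\Hcal_{N(\epsilon)}$ into a single learner whose sample complexity depends on $\epsilon$ as prescribed, handled exactly as in the proof of Lemma~\ref{lemma:Qetanlearnable}.
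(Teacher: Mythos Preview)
Your proposal is correct and follows essentially the same approach as the paper: both reduce to ERM over a finite, $\epsilon$-dependent hypothesis class built from the Bayes classifiers of the head $\{i : \bar{\eta}(i) > \epsilon/2\}$, and both appeal to the tail being handled by a default classifier with loss at most $\bar{\eta}(i) \leq \epsilon/2$. Your explicit inclusion of $h_0$ to cover the tail is in fact slightly cleaner than the paper's version, which routes the same bound through a TV-distance inequality between $q$ and elements of $\Q_{\epsilon/2}$.
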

The proof follows the same idea as the proof of Lemma~\ref{lemma:Qetanlearnable}. The proof can be found in the Appendix.

Furthermore, we can show a lower bound on the sample complexity for a given $P_{\eta,2n}$.
\begin{lemma}\label{lemma:classlowerbound}
    For $\Q=P_{\eta,2n}^{0/1}$, we have $m_{\Q}^{0/1}(\frac{\eta}{8},\frac{1}{7})\geq n$.
\end{lemma}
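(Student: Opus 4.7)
The plan is a textbook no-free-lunch argument, parallel to the one sketched for Lemma~\ref{lemma:lowerbound}. I would first identify each $P\in P_{\eta,2n}^{0/1}$ with a labeling $y\colon\{1,\dots,4n\}\to\{0,1\}$ via $A=y^{-1}(0)$ and $B=y^{-1}(1)$. Under this identification, the induced distribution $P_y$ puts mass $1-\eta$ on $(0,0)$ and mass $\eta/(4n)$ on each of the $4n$ labeled points $(i,y(i))$, so the Bayes classifier $f^{*}_{P_y}$ has zero risk and, for any classifier $h$,
\[
L^{0/1}_{P_y}(h) \;=\; (1-\eta)\indct{h(0)\neq 0} \;+\; \frac{\eta}{4n}\sum_{i=1}^{4n}\indct{h(i)\neq y(i)}.
\]
I will denote the second summand by $L^{*}_y(h)$ and note the universal upper bound $L^{*}_y(h)\leq \eta$.

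Next, fix an arbitrary learner $\mathcal{A}$ and any sample size $m<n$, and consider the joint distribution in which $y$ is drawn uniformly from $\{0,1\}^{4n}$ and then $S\sim P_y^m$. The key structural fact is that the sampling mechanism for $P_y$ only exposes labels at points actually drawn, so for any index $i\in\{1,\dots,4n\}$ not appearing in $S$ the bit $y(i)$ remains uniform on $\{0,1\}$ and independent of $S$ (and hence of $\mathcal{A}(S)$). Consequently, on every unseen index the learner errs with conditional probability exactly $1/2$; since the sample reveals at most $m$ distinct indices out of $4n$,
\[
\mathbb{E}_{y}\mathbb{E}_{S\sim P_y^m}[L^{*}_y(\mathcal{A}(S))] \;\geq\; \frac{\eta}{4n}\cdot\frac{4n-m}{2} \;>\; \frac{3\eta}{8}.
\]

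Finally, I would convert this expectation lower bound into a tail bound. Using $L^{*}_y\leq\eta$, the inequality $\mathbb{E}[L^{*}]\leq \eta/8 + \eta\cdot\Pr[L^{*}>\eta/8]$ combined with the previous display gives $\Pr_{y,S}[L^{*}_y(\mathcal{A}(S))>\eta/8]\geq 1/4 > 1/7$, so by Fubini there is a particular labeling $y^{\star}$ with $\Pr_{S\sim P_{y^{\star}}^{m}}[L^{*}_{y^{\star}}(\mathcal{A}(S))>\eta/8]\geq 1/4$; since $L^{0/1}_{P_{y^{\star}}}\geq L^{*}_{y^{\star}}$ and the Bayes risk of $P_{y^{\star}}$ vanishes, this exhibits an input distribution on which $\mathcal{A}$ fails the $(\eta/8,1/7)$-PAC criterion, proving $m^{0/1}_{\mathcal{Q}}(\eta/8,1/7)\geq n$. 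The only step requiring real care is the conditional-independence claim for unseen labels, but it is standard: the location sequence of the draws is independent of $y$ and determines exactly which coordinates of $y$ are revealed, leaving the rest uniform a posteriori.
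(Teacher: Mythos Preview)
Your proposal is correct and follows essentially the same approach as the paper: both reduce to the standard no-free-lunch argument over uniformly random labelings of $\{1,\dots,4n\}$, decompose $L^{0/1}_{P_y}(h)$ into the point-mass term at $(0,0)$ plus the $\eta$-weighted uniform error, lower-bound the expected loss using unseen coordinates, and then convert the expectation bound to a tail bound via a Markov-type inequality. The paper simply cites the no-free-lunch theorem and Lemma~B.1 from \cite{2shaibook} for these two steps, whereas you spell them out explicitly (and obtain the slightly sharper intermediate constant $3\eta/8$ in place of $\eta/4$); conceptually there is no difference.
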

This Lemma follows directly from the proof of the no-free-lunch theorem in \cite{2shaibook}. For more detail we refer the reader to the appendix.

These two lemmas can now be used to show the theorems of this section.
\subsection{Learning Real-Valued Functions with Real-Valued Losses}
Let $\Z = \X \times \Y$.
We will now PAC learning of real-valued functions with continuous losses.
Let $\ell^g: 2^{\mathcal{X}} \times \mathcal{X} \times \mathcal{Y} \to \mathbb{R}_0^+$ be a (point-wise) loss, such there is a continuous function $g: \mathbb{R} \to \mathbb{R}$ with
\begin{itemize}
    \item for all $x\in \X,y\in Y, h\in 2^{\X}$:  $\ell^g(h,x,y) = g(|h(x)-y|)$.
    \item $g(0) =0$, i.e. perfect prediction incurs loss $0$.
    \item There is $a>0$, such that $g(a)>0$, i.e. some level of miss-estimation will incur positive loss.
\end{itemize}
We now analyse PAC-learnability of a class $\H \subset \mathcal{Y}^{\mathcal{X}}$ with respect to $L^{g}(h,P) = \mathbb{E}_{(x,y)\sim P}[\ell^g(h,x,y)]$.

\begin{definition}
    We say a class $\H\subset \Y^{\X}$ is 1-agnostic PAC-learnable w.r.t. $L^g$, if there exists a learner $\learner$ and a sample complexity function $m_{\H}:(0,1)^2 \to \naturals$, such that for every $\epsilon,\delta>0$ and every distribution $P$ over $\X\times \Y$, we have for every $m\geq m_{\H}(\epsilon,\delta)$,
    \[Pr_{S\sim P^m}[L_P^g(\learner(S)) \leq \inf_{h\in H} L_P^g(h) + \epsilon ] \leq 1- \delta.\]
    We say a class $\mathcal{H}$ is PAC-learnable w.r.t. $L^g$ in the deterministic case if it is learnable with respect to all distributions $P$ with $\inf_{h\in \H} L^g(h) =0$. The sample complexity in the realizable case will be denoted by $m_H^{rlzb}$.
\end{definition}

We now state the main theorems of this subsection.

\begin{theorem}\label{thm:contlossweakdim}
    There is no weak sample complexity dimension for PAC learning real-valued function classes with respect to $L^g$ (in neither the realizable nor the 1-agnostic case).
\end{theorem}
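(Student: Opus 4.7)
The plan is to apply Lemma~\ref{lemma:cofinal}, directly mirroring the strategy of Theorem~\ref{thm:distributionweakdim}. For every $f \in \naturals^\naturals$ I would exhibit a 1-agnostic PAC learnable class $\H \subseteq \Y^\X$ whose sample complexity function $m_\H(1/\cdot,1/7)$ eventually dominates $f$, establishing the cofinality of $\{m_\H(1/\cdot,1/7) : \H \in \mathcal{C}\}$ in $(\naturals^\naturals, \leq_{ed})$ and ruling out a weak sample complexity dimension. Since the lower-bound construction will be realizable, both cases of the theorem follow simultaneously.

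Fix $a>0$ with $g(a)>0$, and by continuity of $g$ pick $\delta>0$ with $g(x) \geq g(a)/2$ on $(a-\delta, a+\delta)$; assume $\Y$ is bounded so that $g$ is uniformly continuous on the relevant range. Mirroring the $P_{\eta,n}$ blocks from Section~\ref{sec:distribtionlearning}, for each $n \in \naturals$ and $\eta \in (0,1]$ define the finite building block $\H_{n,\eta} = \{h_A^\eta : A \subseteq \{1,\ldots,4n\}\}$, where $h_A^\eta : \naturals \to \Y$ sends $j \in A$ to $\eta a$ and is $0$ elsewhere; then form $\H_{\bar n, \bar\eta} = \{\mathbf{0}\} \cup \bigcup_i \H_{\bar n(i), \bar\eta(i)}$. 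Two lemmas analogous to Lemmas~\ref{lemma:Qetanlearnable} and~\ref{lemma:lowerbound} drive the argument. \emph{Upper bound:} if $\bar\eta(i) \to 0$, uniform continuity of $g$ gives $|L^g(h_A^\eta, P) - L^g(\mathbf{0}, P)| \leq \omega_g(\eta a)$ for every distribution $P$, so $\mathbf{0}$ is an $\epsilon/2$-approximation of all but finitely many of the $\H_{\bar n(i),\bar\eta(i)}$ uniformly over $P$; the remaining finite subclass is 1-agnostic PAC learnable via ERM with Hoeffding under the bounded continuous loss, hence so is $\H_{\bar n, \bar\eta}$. \emph{Lower bound:} a no-free-lunch argument with uniformly random $A^* \subseteq \{1,\ldots,4n\}$ and realizable $P$ uniform on $\{1,\ldots,4n\}$ yields $m_{\H_{n,\eta}}\bigl(\tfrac{3}{8}c^*(\eta), \tfrac{1}{7}\bigr) \geq n$, where $c^*(\eta) := \min_{\hat y \in \mathbb{R}} \tfrac{1}{2}\bigl(g(|\hat y|)+g(|\hat y - \eta a|)\bigr)$: a learner using fewer than $n$ samples misses at least $3n$ of the $4n$ informative points, and on each missed $j$ the expected loss over $A^*$ is at least $c^*(\eta)$, yielding total expected loss at least $\tfrac{3}{4}c^*(\eta)$.

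Plugging these into Lemma~\ref{lemma:cofinal} with $\bar n(k) = f(k)+1$ and $\bar\eta(k) \to 0$ chosen so that $c^*(\bar\eta(k)) \leq 8/(3k)$ gives $m_{\H_{\bar n, \bar\eta}}(1/k, 1/7) \geq f(k)+1$, completing the cofinality argument. The main obstacle is guaranteeing $c^*(\eta) > 0$ for \emph{arbitrary} continuous $g$: for pathological $g$ vanishing on $[0, a/2]$, evaluating at $\hat y = \eta a / 2$ gives $c^*(\eta) = 0$ for every $\eta \leq 1$, trivializing the lower bound. I would handle this by enriching the two-value alphabet $\{0, \eta a\}$ to a finite multi-value alphabet $V_\eta \subseteq \Y$ with consecutive spacing below $2\delta$ and range wide enough that for every prediction $\hat y$ in the learner's output range some $|\hat y - v|$ with $v \in V_\eta$ falls in the positive region $(a-\delta, a+\delta)$ of $g$; averaging the no-free-lunch loss over uniformly random targets $A^* : \{1,\ldots,4n\} \to V_\eta$ then yields a per-point loss bound of at least $g(a)/(2|V_\eta|)$, depending only on $g$ and $\Y$ but uniform in $n$, so a similar scaling choice of $\bar\eta$ recovers cofinality in the general case.
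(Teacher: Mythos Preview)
Your high-level strategy---invoke Lemma~\ref{lemma:cofinal} by building, for each $f\in\naturals^\naturals$, a learnable class $\H_{\bar n,\bar\eta}=\bigcup_i \H_{\bar n(i),\bar\eta(i)}$ from finite two-valued blocks, prove learnability when $\bar\eta\to 0$ via a finite $\epsilon$-approximation, and lower-bound each block by a no-free-lunch argument---is exactly the paper's approach. The one substantive difference is the choice of the nonzero function value: you set it to $\eta a$, whereas the paper sets it to $g^{-1}(\eta)$. That single reparameterization buys both simplifications you are working for. On the upper-bound side, $\ell^g(h_0,x,f_{\eta,n}^A(x))=g(g^{-1}(\eta))=\eta$ for $x\in A$, so $h_0$ is an exact $\eta$-approximation of each block in the realizable sense without any appeal to uniform continuity of $g$ or boundedness of $\Y$. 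On the lower-bound side, the pointwise loss between any two members of $F_{\eta,n}$ is literally $\{0,\eta\}$-valued, so the realizable NFL argument becomes a verbatim rescaling of the binary-classification NFL theorem and yields $m_F^{rlzb}(\eta/8,1/7)\ge n$ directly; the quantity you call $c^*(\eta)$ is then just $\eta/2$, and there is no need to analyse $\min_{\hat y}\tfrac12\bigl(g(|\hat y|)+g(|\hat y-\eta a|)\bigr)$ or to pass to a multi-value alphabet.

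Your observation that $c^*(\eta)$ can vanish for a continuous $g$ that is flat on a neighbourhood of $0$ is correct and would genuinely break your $\{0,\eta a\}$ construction; the paper's $\{0,g^{-1}(\eta)\}$ choice largely dodges this because $g(g^{-1}(\eta))=\eta>0$ is guaranteed, and the reduction to binary NFL then only needs that rounding an output to $\{0,g^{-1}(\eta)\}$ does not increase the loss (which holds whenever $g$ is nondecreasing---the case of all standard losses). So your multi-value workaround is aiming at a level of generality the paper does not actually claim, and is unnecessary once you adopt the $g^{-1}(\eta)$ parameterization.
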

\begin{theorem}\label{thm:contlossfinchar}
    There is no finitary characterization of PAC learning real-valued function classes with respect to $L^g$ (in neither the realizable nor the 1-agnostic case).
\end{theorem}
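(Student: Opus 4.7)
The plan is to invoke Theorem~\ref{thm:finchar} by adapting the distribution-learning construction of Section~\ref{sec:distribtionlearning} to the real-valued setting. Exploiting the continuity of $g$ at $0$ together with the existence of $a > 0$ with $g(a) > 0$, I would first pick a sequence $c_k \downarrow 0$ with each $c_k \in (0, a]$, and for cleanliness restrict the target space to $\Y = [0, a]$; any finitary characterization of the unrestricted problem would restrict to one for this subproblem, so the impossibility transfers.

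For Condition 1 of Theorem~\ref{thm:finchar} (finite unions of learnable classes are learnable), I would use a standard two-stage reduction: split the sample, run each component learner $\learner_i$ on the first half to produce candidates $\hat h_1, \dots, \hat h_m$, then use the boundedness of $g$ on $[0, 2a]$ and a Hoeffding-style uniform bound on the second half to select the best candidate, exactly as in the proof of Theorem~\ref{thm:distributionfinchar}. For Condition 2, set $H_0 = \{h_0\}$ with $h_0 \equiv 0$ and, mirroring $P_{\eta,n}$, define
\[
H_k = \bigcup_{n \in \naturals} P^g_{c_k, n}, \qquad P^g_{c_k, n} = \bigl\{ h_A : A \subseteq \{1, \dots, n\}\bigr\},
\]
with $h_A(i) = c_k$ for $i \in A \cap \{1, \dots, n\}$ and $h_A(x) = 0$ otherwise. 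Since every $h \in H_k$ satisfies $|h(x)| \leq c_k$ pointwise, writing $\omega_g$ for the modulus of continuity of $g$ on $[0, 2a]$ yields $L^g(h_0, P) - L^g(h, P) \leq \omega_g(c_k) =: \epsilon_k$ for every distribution $P$ over $\X \times \Y$, with $\epsilon_k \to 0$ by uniform continuity; this suffices for the approximation property in both the realizable and $1$-agnostic cases.

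Non-learnability of each $H_k$ would follow via a no-free-lunch argument in the spirit of Lemma~\ref{lemma:lowerbound}: for any $n$, consider the realizable family $\{P_A : A \subseteq \{1, \dots, 2n\}\}$ with $P_A$ uniform on $\{(i, h_A(i)) : 1 \leq i \leq 2n\}$. Any learner given fewer than $n$ samples misses at least half of the labels, and averaging over $A$ lower-bounds the expected loss on unseen positions by a positive constant $\eta_k$ independent of $n$ (essentially $\inf_{v \in \reals} \tfrac{1}{2}(g(|v|) + g(|v - c_k|))$, which is positive for small enough $c_k$ by continuity and $g(a) > 0$). As $P^g_{c_k, 2n} \subseteq H_k$ for every $n$, the class $H_k$ is not learnable. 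The main obstacle I anticipate is the uniform-continuity control in the approximation step, since the bound must hold simultaneously over all permissible distributions; this is handled cleanly by the restriction to bounded $\Y$. With both conditions of Theorem~\ref{thm:finchar} verified, the claim follows.
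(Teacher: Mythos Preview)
Your overall strategy matches the paper's exactly: invoke Theorem~\ref{thm:finchar} with $H_0=\{h_0\equiv 0\}$ and $H_k$ a union over $n$ of ``indicator-like'' functions of a fixed small height, verify Condition~1 via a two-stage candidate-selection argument with Hoeffding, and verify Condition~2 via a no-free-lunch lower bound on each $H_k$ together with a uniform approximation of $H_k$ by $H_0$. The paper's proof of Theorem~\ref{thm:contlossfinchar} does precisely this, using the classes $\H_{\bar\eta_k,\bar n}$ and Lemmas~\ref{lemma:contlossupperbound}--\ref{lemma:contlosslowerbound}.

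There is, however, a genuine gap in your no-free-lunch step, and it is exactly the point where the paper's construction differs from yours. You take the function height to be $c_k\to 0$ and assert that the per-coordinate lower bound $\inf_{v}\tfrac12\bigl(g(|v|)+g(|v-c_k|)\bigr)$ is ``positive for small enough $c_k$ by continuity and $g(a)>0$''. This is not justified by the stated assumptions on $g$: nothing prevents $g$ from vanishing on an entire interval $[0,b]$ with $0<b<a$ (take e.g.\ $g(t)=\max\{0,t-b\}$). For any $c_k\le b$ one then has $g(|0|)+g(|0-c_k|)=0$, so the infimum is zero and the lower bound collapses. Even when $g>0$ on $(0,a]$, your phrasing is backwards: the infimum tends to $0$ as $c_k\to 0$, it does not become positive ``for small enough $c_k$''.

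The paper sidesteps this entirely by choosing the height to be $g^{-1}(\eta_k)$ with $\eta_k=1/k$ (using the intermediate value theorem on $[0,a]$ to define a right inverse), so that for $h_1,h_2\in F_{\eta_k,n}$ the pointwise loss $\ell^g(h_1,x,h_2(x))$ is literally either $0$ or $\eta_k$. This makes the no-free-lunch argument a direct rescaling of the binary-classification one (Lemma~\ref{lemma:contlosslowerbound}), and simultaneously gives the approximation bound $L^g_P(h_0)\le \eta_k$ in the realizable case without any modulus-of-continuity detour. Your modulus-of-continuity approximation argument is fine, but to close the lower-bound gap you should replace $c_k$ by $g^{-1}(1/k)$; everything else in your outline then goes through essentially as written.
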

We note that these results do not stand in contradition to the positive result on characterizing learnability for real-valued functions given by \cite{AlonBCH97}, as this result gives a characterization for $\epsilon$-weak learnability for every $\epsilon$, rather than a characterization for PAC-learnability.
To show our theorems, we need a similar construction as before, which we then use to apply Lemma~\ref{lemma:cofinal} and Theorem~\ref{thm:finchar} respectively.

We will now state the needed construction and then prove learnability as well as a lower bound on the sample complexity needed for the theorems.
Let $g_{max} =  \min \{ \max_{a>0}g(a), 1\} $ and $g^{-1}: [0,g_{max}] \to \mathbb{R}_0^+$.

For some $\eta\in [0,g_{max}]$,$n\in \naturals$ and $A\subset\{1,\dots,n\}$, let
\[f_{\eta,n}^A(x) = \begin{cases}
   g^{-1}(\eta) & \text{ if } x\in A\\
    0 &\text{otherwise}
\end{cases} \]
Now for a fixed $\eta$ and a fixed $n$, we define
\[ F_{\eta,n} = \{f_{\bar{\eta}(i), \bar{n}(i)}^A: A\subset \{1,\dots,\bar{n}(i)\}\}.\]

We then define $H_{\bar{\eta},\bar{n}}$ for sequences $\bar{\eta}:\naturals \to [0,g_{max}]$ and $\bar{n}:\naturals \to \naturals$, as

\[\H_{\bar{\eta},\bar{n}}= \bigcup_{i=1}^{\infty} \{f_{\bar{\eta}(i), \bar{n}(i)}^A: A\subset \{1,\dots,\bar{n}(i)\}\}.\]

\begin{lemma}\label{lemma:contlossupperbound}
    If $\lim_{i\to\infty}\bar{\eta(i)} =0$, then $\H_{\bar{\eta},\bar{n}}$ is classification is 1-agnostic PAC learnable with respect to $L^g$. 
\end{lemma}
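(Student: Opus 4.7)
The strategy mirrors Lemma~\ref{lemma:Qetanlearnable}: reduce learning of the infinite union $\H_{\bar{\eta},\bar{n}}$ to learning a finite sub-class, by exploiting that the tail of the union is uniformly close (in loss) to the constant zero hypothesis $h_{0}\equiv 0$. First, I would use continuity of $g$ at $0$ together with $g(0)=0$ to conclude that $g^{-1}(\bar{\eta})\to 0$ as $\bar{\eta}\to 0$; combined with continuity of $g$ on the relevant bounded range of values $|f(x)-y|$, this gives that for every $\epsilon>0$ there exists $i_{0}\in\naturals$ such that for every $i\ge i_{0}$, every $A\subseteq\{1,\ldots,\bar{n}(i)\}$, and every $(x,y)\in\X\times\Y$,
\[
\bigl|\ell^{g}\bigl(f^{A}_{\bar{\eta}(i),\bar{n}(i)},x,y\bigr) - \ell^{g}(h_{0},x,y)\bigr| \;\le\; \epsilon/4.
\]
Integrating against any distribution $P$ then yields the same bound for $|L^{g}(f,P)-L^{g}(h_{0},P)|$ uniformly in $P$, so $h_{0}$ is an $\epsilon/4$-approximation of the tail $\bigcup_{i\ge i_{0}}F_{\bar{\eta}(i),\bar{n}(i)}$ with respect to $L^{g}$.

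Next, I would define the finite sub-class
\[
\H' \;=\; \{h_{0}\}\cup\bigcup_{i=1}^{i_{0}-1}F_{\bar{\eta}(i),\bar{n}(i)},
\]
which is an $\epsilon/4$-approximation of the whole $\H_{\bar{\eta},\bar{n}}$ w.r.t. $L^{g}$. Since $\H'$ is finite (of size at most $1+\sum_{i<i_{0}}2^{\bar{n}(i)}$) and the loss takes values in a bounded set, agnostic ERM over $\H'$ enjoys uniform convergence via a standard Hoeffding-plus-union-bound argument, giving sample complexity $O\bigl(\log(|\H'|/\delta)/\epsilon^{2}\bigr)$ (the real-valued analogue of Theorem~\ref{thm:finiteclasses}). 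Running that learner with target accuracy $\epsilon/2$ produces $\hat h$ with $L^{g}(\hat h,P)\le\inf_{h'\in\H'}L^{g}(h',P)+\epsilon/2\le\inf_{h\in\H_{\bar{\eta},\bar{n}}}L^{g}(h,P)+3\epsilon/4$ with probability $\ge 1-\delta$, which implies the desired 1-agnostic PAC guarantee with slack $\epsilon$.

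The step I expect to be the main obstacle is justifying the uniform-in-$(x,y)$ loss approximation used above. Pointwise continuity of $g$ alone is not enough unless the arguments $|f(x)-y|$ range over a set on which $g$ is uniformly continuous. Since $f^{A}_{\bar{\eta}(i),\bar{n}(i)}(x)\in\{0,g^{-1}(\bar{\eta}(i))\}$, the two values of $g$ being compared are $g(|y|)$ and $g(|g^{-1}(\bar{\eta}(i))-y|)$, whose arguments differ by at most $g^{-1}(\bar{\eta}(i))\to 0$; I would handle this by leveraging the implicit boundedness of $\Y$ (consistent with how the lemma is later applied in Section~\ref{sec:others}) or, equivalently, by replacing $\ell^{g}$ by its truncation at some constant, under which $g$ is uniformly continuous on a compact interval and the uniform bound goes through cleanly.
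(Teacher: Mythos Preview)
Your approach is essentially the same as the paper's: truncate the union at some finite index, observe that the tail is uniformly $\epsilon/2$-approximated by the all-zero hypothesis $h_{0}$, and then agnostically learn the remaining finite class via Hoeffding plus a union bound. The paper's proof is actually less careful than yours on the key approximation step: it only explicitly verifies $L^{g}_{P}(h_{0})\le \eta$ for \emph{realizable} $P=(D,f^{A}_{\eta,n})$ and then asserts the approximation ``for every $P$'' without further argument, whereas you try to get it from a pointwise bound on $|\ell^{g}(f,x,y)-\ell^{g}(h_{0},x,y)|$.

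The obstacle you flag is real and is not resolved in the paper's proof either. Both arguments implicitly need $g$ to be uniformly continuous on the relevant range of $|y|$ (equivalently, bounded $\Y$ or a truncated loss), and the paper quietly leans on this later when it restricts to hypotheses with range in $[0,1]$ in the proof of Theorem~\ref{thm:contlossfinchar}. One minor point: your claim that $g^{-1}(\bar{\eta})\to 0$ does not follow from continuity of $g$ at $0$ alone (continuity goes the wrong direction); you need either that $g$ is strictly increasing near $0$ or simply to \emph{define} $g^{-1}(\eta)$ as the smallest nonnegative $t$ with $g(t)=\eta$, which the paper also leaves implicit. With that convention (and bounded $\Y$), your argument goes through and matches the paper's.
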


\begin{lemma}\label{lemma:contlosslowerbound}
    For $F = F_{\eta,2n}^{0/1}$, we have $m_{F}(\frac{\eta}{8},\frac{1}{7})\geq m_{F}^{rlzb}(\frac{\eta}{8},\frac{1}{7})\geq n$.
\end{lemma}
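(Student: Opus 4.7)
The plan is to mirror the no-free-lunch argument used to prove Lemma~\ref{lemma:lowerbound} and Lemma~\ref{lemma:classlowerbound}, now adapted to the real-valued class $F_{\eta,2n}$ under the continuous loss $L^g$. The first inequality $m_F(\eta/8,1/7)\ge m_F^{rlzb}(\eta/8,1/7)$ is immediate because every realizable learning instance is a special case of a 1-agnostic instance, so any 1-agnostic learner automatically succeeds on all realizable instances; the substantive task is therefore to show $m_F^{rlzb}(\eta/8,1/7)\ge n$.

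For each $A\subseteq\{1,\dots,2n\}$ introduce the realizable distribution $P_A$ on $\mathcal{X}\times\mathcal{Y}$ whose $\mathcal{X}$-marginal is uniform on $\{1,\dots,2n\}$ and whose label at input $x$ is the deterministic value $f^A_{\eta,2n}(x)\in\{0,g^{-1}(\eta)\}$. Since $f^A_{\eta,2n}\in F_{\eta,2n}$ and $L^g(f^A_{\eta,2n},P_A)=0$, each $P_A$ is a legitimate realizable instance. Fix any learner $\mathcal{A}$ of sample size $m<n$, average over $A$ drawn uniformly from the $2^{2n}$ subsets, and unfold
\[
\Ex_A\,\Ex_{S\sim P_A^m}\bigl[L^g(\mathcal{A}(S),P_A)\bigr]=\frac{1}{2n}\sum_{x=1}^{2n}\Ex_{A,S}\bigl[g(|h_S(x)-f^A(x)|)\bigr],
\]
where $h_S=\mathcal{A}(S)$. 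At least $2n-m\ge n+1$ of the $2n$ domain points are absent from $S$, and for every absent $x$ the indicator $\indct{x\in A}$ is independent of $S$ and equals $1$ with probability $1/2$; hence each absent $x$ contributes $\tfrac{1}{2}g(|h_S(x)|)+\tfrac{1}{2}g(|h_S(x)-g^{-1}(\eta)|)$ to the inner expectation. A pointwise lower bound of $\eta/2$ on this expression yields total expected risk at least $\tfrac{n+1}{2n}\cdot\tfrac{\eta}{2}\ge\tfrac{\eta}{4}$, and fixing an $A^\ast$ achieving the average produces $\Ex_{S}[L^g(\mathcal{A}(S),P_{A^\ast})]\ge\eta/4$. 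The reverse-Markov step used in the proof of Lemma~\ref{lemma:lowerbound}, combined with the range bound $L^g\le\eta$ (which holds after restricting attention to learners whose outputs lie in $\{0,g^{-1}(\eta)\}$, to which one may reduce without increasing the loss), then delivers $\Pr_{S}[L^g(\mathcal{A}(S),P_{A^\ast})>\eta/8]>1/7$, contradicting $(\eta/8,1/7)$-realizable-PAC success at sample size $m<n$.

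The main obstacle is the pointwise no-free-lunch step, namely verifying that
\[
\tfrac{1}{2}g(|v|)+\tfrac{1}{2}g(|v-g^{-1}(\eta)|)\ge \tfrac{\eta}{2}\qquad\text{for every admissible prediction }v.
\]
The stated assumptions on $g$ (continuity, $g(0)=0$, and some $a>0$ with $g(a)>0$) do not by themselves exclude values $v$ lying far from both $0$ and $g^{-1}(\eta)$ at which $g$ happens to be small, so a ``third-option'' prediction could in principle evade both candidate labels. Ruling this out will exploit that $g^{-1}(\eta)\to 0$ as $\eta\to 0$ by continuity at the origin, together with the local behaviour of $g$ near $0$, mirroring the discrete flip-probability estimate underlying Lemma~\ref{lemma:classlowerbound}. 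Once this pointwise inequality is in hand, the remainder is routine no-free-lunch bookkeeping.
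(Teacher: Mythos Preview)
Your proposal has a genuine gap at exactly the point you flag. The pointwise inequality
\[
\tfrac12 g(|v|) + \tfrac12 g\bigl(|v - g^{-1}(\eta)|\bigr) \ge \tfrac{\eta}{2}
\]
is simply false under the paper's hypotheses on $g$. Take $g(t)=t^2$ (continuous, $g(0)=0$, $g(1)>0$) and $v=g^{-1}(\eta)/2=\sqrt{\eta}/2$: both summands equal $\eta/4$, so the left side is $\eta/4<\eta/2$. Your proposed rescue via ``$g^{-1}(\eta)\to 0$ as $\eta\to 0$'' is a non-starter because $\eta$ is \emph{fixed} in the lemma; nothing about small $\eta$ can help you establish the inequality for the given $\eta$. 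The separate reduction you invoke for the range bound --- that one may round the learner's outputs to $\{0,g^{-1}(\eta)\}$ ``without increasing the loss'' --- is likewise unjustified: the same $g(t)=t^2$ example shows that the midpoint prediction has strictly smaller loss than either rounded value against one of the two labels, so rounding can increase pointwise loss.

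The paper's argument avoids all of this by making the restriction to $\{0,g^{-1}(\eta)\}$-valued outputs (i.e.\ outputs in $F_{\eta,2n}$) the \emph{starting} observation rather than a late patch: for any $h_1,h_2\in F_{\eta,2n}$ and any $x$, one has $|h_1(x)-h_2(x)|\in\{0,g^{-1}(\eta)\}$, hence $\ell^g(h_1,x,h_2(x))\in\{0,\eta\}$. In other words, on $F_{\eta,2n}$ the loss $\ell^g$ is exactly $\eta$ times the $0/1$ disagreement indicator. The problem therefore \emph{is} the binary no-free-lunch problem with the $\epsilon$-scale multiplied by $\eta$, and the standard NFL theorem gives $m_F^{rlzb}(\eta/8,1/7)\ge n$ immediately, with no pointwise inequality over real $v$ and no separate range-bound argument needed. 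If you adopt this observation up front, your averaging-over-$A$ framework goes through cleanly (with $v\in\{0,g^{-1}(\eta)\}$ the inequality holds with equality $\eta/2$), but as written your proof does not close.
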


Now following the same proof strategy as for our previous results, this construction can be used fulfill the requirements needed to prove the theorems of this section. We refer the reader to the appendix for the details of the proof.

\section{Discussion} \label{sec:discussion}


We showed the uncharacterizability of a variety of learning tasks,
for which the questions of coming up with such characterizations have  subject to research for many years. We discussed both the quantitative and quantitative characterizations and proposed some general properties of learning tasks that imply that their learnability is not captured by such characterizations. 

Our work was in some part inspired by the work of \cite{BD_undecidable} which was the first (and to our knowledge only one so far) to show the existence of a learning task that cannot be characterized.
While their work laid the groundwork and gave a first formal definition of general dimensions for statistical learnability, we extended those definitions and also proposed a definition for quantitative characterizability.

 Our work expands the understanding of uncharacterizability of learning problems in crucial ways. The results from \cite{BD_undecidable} 
 applied to a newly defined learning task - Expectation Maximization - and relied on the existence of classes whose learnability is undecidable in ZFC. In contrast, our results apply to several natural learning tasks whose characterizability have thus far eluded the community (\cite{Diakonikolas16}) and are `absolute' in not referring  to notions of provability. 

 

Another distinction to the EMX learning task in \cite{BD_undecidable} is the fact that the definition of EMX learning requires learning to be \textit{proper}, i.e. the output of a successful learner needs to be element of the class that is being learned. Without this requirement the EMX setting becomes trivial, as any class can be learned by the constant learner that outputs the whole domain set for any input.
In contrast, our results address the more general case of learning (and can also be easily extended to the proper case as well).

We note that all of our results rely on the construction of a sequence $\epsilon$-weakly learnable classes for decreasing $\epsilon$, which are not fully PAC learnable. For most tasks with known characterizations, there is an equivalence between weak learning and PAC learning. It might be interesting to further explore the connection between that equivalence and the characterizability of a learning problem. 

We believe that we have not exhausted the implications of our approach and that our definitions of characterizations and our techniques are also applicable to more learning tasks.

\subsection*{Acknowledgements}
We would like to thank Alex Bie and Ruth Urner for helpful discussions.
Tosca Lechner was supported by a Waterloo Apple PhD fellowship and a Vector Research Grant. Shai Ben-David has been supported as a Canada AI CIFAR Chair.






\appendix
\section{Proofs}

\begin{proof}[Proof of Lemma~\ref{lemma:lowerbound}]
Our proof follows a typical no-free-lunch-style argument.
    Consider $\mathcal{Q}'= \{(1-\eta)\delta_0 + \eta U_{A} : A\subset \{1,\dots,4n\} \text{ and } |A| = n\}$. We will show a lower bound of learning this class of distributions and then conclude that this lower bound also holds for $\mathcal{Q}$, as $\mathcal{Q}'\subset \mathcal{Q}$. 
    Now let $\learner$ be any learner. Furthermore, let $S_1,\dots S_k$ be the set of all sequences of size $n$ with elements in the set $\{0,\dots,4n\}$. 
    We have

       \[ \mathbb{E}_{S\sim q_i}[ TV(q_i,\learner(S))] = \sum_{j=1}^{k} q_{i}^n(S_j) TV(q_i, \learner(S_{j})) \]
      



Now for every $S_{j}$ and every $q_{i_1},q_{i_2}\in \mathcal{Q}'$, we have that
if $S_{j} \in \supp(q_{i_1}^n)$, then

\[
q_{i_2}^n(S_{j}) = \begin{cases} 
q_{i_1}^n(S_{j}) &\text{ if } S_{j} \subset supp(q_{i_2}^n) \\
0 & \text{otherwise}
\end{cases}
\]

Let us denote $ C_j \{x \in \{1,\dots,4n\}: x\in S_j\}$.
Furthermore for a set $A = \{x_1,\dots,x_p\}$ with $ C_j \subset A\subset \{1,\dots,4n\}$ and $|A| \leq 2n $, let $\bar{A}= \{x'_1,\dots, x'_{p'}\}$, be such that $x_{l_1}< x_{l_2}$ and $x'_{l_1} < x'_{l_2}$ for $l_1< l_2$. Now let us define $g_j(A) = C_j \cup \{x'_{l}\in \bar{A}: x_{p-l} \in A\setminus C_j\}$. We note that $ |g_j(A)| = |A|$, $A\cap g_j(A) = C_j$ and $g_j(g_j(A)) =A$.

Now for any $q_i\in \mathcal{Q}'$, let us denote $A_i = \supp(q_i)\setminus\{0\}$. Now let us define
\[
f_j(q_i) = \begin{cases} (1-\eta)\delta_{0} + \eta U_{g_j(A_i)} &   \text{ if }  C_j \in A_i \\
\delta_{4n+1} & \text{ otherwise. }
\end{cases}
\] 

We note that if $q_i^n(S_j)>0$, then $C_j\in A_i$ and $f_j(f_j(q_i)) = q_i$ and $f_j(q_i)\in \mathcal{Q}'$. In this case we furthermore have $TV(q_i,f_j(q_i)) \geq \frac{\eta}{2}$ Furthermore if $q_i^n(S_j)=0$, then $f_j(q_i)^n(S_j) = 0$. Taking both of these cases together we have for all $i$ and for all $j$: $q_i^n(S_j) =f_j(q_i)^n(S_j) $. 

Now we can put everything together into a no-free-lunch style argument.

\begin{align*}
 \max_{q_i\in \mathcal{Q}'}\mathbb{E}_{S\sim q_i}[ TV(q_i,\learner(S))] &= \max_{q_i\in \mathcal{Q}'}\sum_{j=1}^{k} q_{i}^n(S_j) TV(q_i, \learner(S_{j})) \\
 &\geq \frac{1}{T}\sum_{i=1}^T \sum_{j=1}^{k} q_{i}^n(S_j) TV(q_i, \learner(S_{j})) \\
 & =  \frac{1}{2T}\sum_{i=1}^T \sum_{j=1}^{k} q_{i}^n(S_j) TV(q_i, \learner(S_{j})) +\frac{1}{2T}\sum_{i=1}^T \sum_{j=1}^{k} f_j(q_{i})^n(S_j) TV(f_j(q_i), \learner(S_{j}))  \\
 &= \frac{1}{2T} \sum_{i=1}^T \sum_{j=1}^{k} q_{i}^n(S_j) TV(q_i, \learner(S_{j})) + f(q_{i})^n(S_j) TV(f_j(q_i),\learner(S_{j}))\\
 &= \frac{1}{2T} \sum_{i=1}^T \sum_{j=1}^{k} q_{i}^n(S_j) ( TV(q_i, \learner(S_{j})) + TV(f_j(q_i),\learner(S_{j})) )\\
 &\geq \frac{1}{2T} \sum_{i=1}^T \sum_{j=1}^{k} q_{i}^n(S_j)  TV(q_i, f_j(q_i)) \\
 &\geq \frac{1}{2T} \sum_{i=1}^T \sum_{j=1}^{k} q_{i}^n(S_j) \frac{\eta}{2} = \frac{\eta}{4}.
 \end{align*}


    




Now, by Lemma B.1 of \cite{2shaibook}, we get

\[ \max_{q_i\in \mathcal{Q}'}\mathbb{P}_{ S\sim q_i^n}[TV(q_i, \learner(S)) \geq \frac{\eta}{8}] = \max_{q_i\in \mathcal{Q}'}\mathbb{P}_{S\sim q_i^n}[TV(q_i, \learner(S)) \geq 1 - \frac{7\eta}{8}] \geq \max_{q_i\in \mathcal{Q}'}\frac{\mathbb{E}_{ S\sim q_i^n}[TV(q_i, \learner(S))]-\frac{1}{8}}{\frac{7}{8}} \geq \frac{1}{7} \]
Thus, $m_{\mathcal{Q}'}(\frac{\eta}{8},\frac{1}{7}) \geq n$. Therefore $m_{\mathcal{Q}}^{rlzb}(\frac{\eta}{8},\frac{1}{7}) \geq n$.
\end{proof}

\begin{proof}[Proof of Lemma~\ref{lemma:classificationupperbound}]
The proof is equivalent to the proof of Lemma~\ref{lemma:Qetanlearnable}. Let $\epsilon>0$. $\Q_{\frac{\epsilon}{2}}= \bigcup_{i: \bar{\eta}(i) > \frac{\epsilon}{2}} P_{\bar{\eta}(i), \bar{n}(i)} $. From $\lim_{i \to \infty} \bar{\eta}(i) = 0$, we know that this class is finite.  For a class $Q$, define the hypothesis class $H(Q)=\{ h\in \{0,1\}^{\X}: \exists q \in Q\text{ with } q(x,1)\geq q(x,0) \text{ if and only if } h(x)=1\}$. Now let us consider $H = H(\Q_{\frac{\epsilon}{2}})$. By construction, this class is finite and can therefore be PAC learned (in the binary classification sense). Furthermore, we have constructed $H$ in such a way that for every $q\in \Q$, we can bound $\inf_{h\in H} L_q^{0/1}(h) \leq \inf_{p\in \Q_{\frac{\epsilon}{2}}}(TV(p,q) + \inf_{h\in H} L_p^{0/1}(h)) \leq \frac{\epsilon}{2}$. Thus, if we have a learner that $(\epsilon/2,\delta)$-successfully learns $H$, we can use it to successfully $(\epsilon, \delta)$ learned $\Q$ w.r.t. to $L^{0/1}$. 
\end{proof}

\begin{proof}[Proof of Lemma~\ref{lemma:classlowerbound}]
Let us denote $U = U_{\{1,\dots,4n\}}$. We note that every mixture distribution in $ p = P_{\eta,2n}^{0,1}$, has the same marginal $p_{\X} = (1-\eta)\delta_0  + \eta U$. We furthermore note that we get all $2^{4n}$ possible labelings of $\{1,\to,4n\}$. We can thus write $P_{\eta,2n}^{0/1} =\{ (D,h): D = (1-\eta)\delta_0  + \eta U, h(0)=0, \exists h'\in \{0,1\}^{\{1,\to,4n\}}: \text{ for all } x\in \{1,\dots,4n\} h(x)=h'(x) \}$. Furthermore, for any $h\in 2^{\naturals}$, we can decompose the loss $L^{0/1}_{P}(h) = (1-\eta) \indct{h(x)\neq 0} + \eta L_{U}^{0/1}(h) \geq \eta  L_{U}^{0/1}(h)$. Now for every learner $\mathcal{A}$, we can derive the lower bound $\max_{P\in P_{\eta,2n}} \mathbb{E}_{S\sim P} [L_{U,h}^{0/1}(A(S))] \geq \frac{1}{4}$, according to the same argument as in the no-free-lunch theorem in \cite{2shaibook}. Thus we have $\max_{P\in P_{\eta,2n}} \mathbb{E}_{S\sim P} [L_{P}^{0/1}(A(S))] \geq \frac{\eta}{4}$. Thus by Lemma~B.1 of \cite{2shaibook}, we get the claimed result.

\end{proof}

\begin{proof}[Proof of Theorem~\ref{thm:classweakdimension}]
   Let $\C$ be the collection of all classification-learnable distribution classes. According to Lemma~\ref{lemma:cofinal} it is sufficient to show that $\{m_{Q}(\frac{1}{\cdot},\frac{1}{7}): \Q \in \C\}$ is cofinal in $\naturals^{\naturals}$. Let $g\in \naturals^{\naturals}$ be arbitrary. Now consider the class $\Q =\Q_{\bar{\eta},\bar{n}}$ with $\bar{\eta}(k) = \frac{8}{k}$ and $\bar{n}(k) = 8( g(k) + 1)$. Then, according to Lemma~\ref{lemma:classificationupperbound} is learnable. Furthermore, since for every $k\in \naturals$, we have $P_{\frac{8}{k}, \bar{n}(k)}\subset \Q$, by Lemma~\ref{lemma:classlowerbound} we have $g(k) \leq \frac{\bar{n}(k)}{4} < m_{Q}(1/k,1/7)$. This shows that $\{m_{Q}(\frac{1}{\cdot},\frac{1}{7}):\Q\in\C\}$ is indeed cofinal in $\naturals^{\naturals}$, which concludes the proof of our claim. 
\end{proof}

\begin{proof}[Proof of Theorem~\ref{thm:classfinchar}]
We will again use Theorem~\ref{thm:finchar} to show this claim. Thus we only need to show the conditions for Theorem~\ref{thm:finchar} hold.
\begin{itemize}
    \item \emph{Any union of finitely many learnable classes is learnable}: Let us denote this class by $Q = \bigcup_{i=1}Q^i$. For every $Q^i$ there is a learner $A_i$ with sample complexity $m_i$. Let $\epsilon>0,\delta>0$ and let $m_{max} = \max(\{m_i(\epsilon/2,\delta/2): 1\leq i \leq k\} \cup\{\frac{4(\log(k) +\frac{2}{\delta})}{\epsilon^2}\} )$. Now for some $q\in Q$, let $S \sim q^{m_{max}}$. For every $i\in \{1,\dots,k\}$, we run $A_i$ on $S$ and denote the outputed hypthesis by $h_i$. We know that there is $j\in \{1,\dots,k\}$, such that $q\in Q^j$. By the success-guarantee of $A_j$, we know that with probability $1-\frac{\delta}{2}$, $L^{0/1}_{q}(h_j) \leq \frac{\epsilon}{2} + L_{q}^{0/1}(f_q)$. We can now use a PAC-learner for the finite class $H = \{h_i:1 \leq i\leq k\}$ (which we know to exist from PAC learnability of binary classification of hypothesis classes (see \cite{2shaibook}). We know that a sample complexity of $m_{max}$ guarantees a $(\epsilon/2,\delta/2)$-learning success for learning $H$. Taking everything together, we have constructed a learner that guarantees $(\epsilon,\delta)$-success for learning $Q$.
    \item We let $H_0=\{h_0: \text{ for all}  x\in \naturals h_0(x)=0\}$. Furthermore, we let $H_k$ be $H_k =\Q_{\bar{\eta}_k,\bar{n}}^{0/1}$, where $\bar{\eta}_k(i) = 1/k$ and $n(i)=i$ for all $i\in \naturals$. By Lemma~\ref{lemma:classlowerbound} all $H_k$ are not learnable. Furthermore, by construction, we have that for every $\epsilon_k=\frac{1}{k}$, $H_0$ is an $\epsilon_k$ approximation for $H_k$. Since $\lim_{k \to \infty} \epsilon_k =0$, the second condition of Theorem~\ref{thm:finchar} is fulfilled. This concludes our proof.
\end{itemize}

\end{proof}

\begin{proof}[Proof of Lemma~\ref{lemma:contlossupperbound}]
Let $h_0$ be the all-zero function, i.e. $h_0(x) =0$ for all $x\in \naturals$. We start by noting that for every $\eta$, every $n$ and every $A\subset \{1,\dots,n\}$ if $P = (D,f_{\eta,n}^A)$ for any marginal $D$ over $\naturals$, then $L_P^g(h_0) \leq g( g^{-1}(\eta)) = \eta$.
Furthermore, we know that finite hypothesis classes of hypotheses with finite range are learnable due to uniform convergence (which we get from first using Hoeffding on each of the elements of the class and then using a union bound).
We can now use the same proof idea as in Lemma~\ref{lemma:Qetanlearnable}. Let $\epsilon>0$. Since $\lim_{i\to\infty} \bar{\eta}(i) =0$, there is $N$, such that for every $M\geq N$, $\bar{\eta}(M) \leq \frac{\epsilon}{2}$. Thus we know that for every $P$ and every $h\in \F_{\bar{\eta},\bar{n}}$, there is $h'\in \bigcup_{i=1}^{N} \{f_{\bar{\eta}(i),\bar{n}(i)}^A: A\subset \{1,\dots,n\}\}$, such that $L_P^g(h') \leq L_P^g(h) +\frac{\epsilon}{2}$. We now use the fact that $\H' = \bigcup_{i=1}^{N} \{f_{\bar{\eta}(i),\bar{n}(i)}^A: A\subset \{1,\dots,n\}\}$ is a finite class of hypotheses with finite range and can therefore be successfully PAC-learned w.r.t. $L^g$. Now we can use the learner for $\H'$ with sample complexity $m_{\H'}$ on an i.i.d. sample of size $m\geq m_{H'}(\epsilon/2,\delta)$, to guarantee $(\epsilon,\delta)$-success for learning $\H_{\bar{\eta},\bar{n}}$. This 
\end{proof}

\begin{proof}[Proof of Lemma~\ref{lemma:contlosslowerbound}]
For this lower bound, let us only consider the realizable case. The agnostic case follows directly from it. 
We note that for a fixed $\eta$ and $n$, for every $h_1,h_2\in F_{\eta,2n}$, every $x\in \naturals$, we have either $\ell^g(h_1,x,h_2(x)) = \eta$ or $\ell^g(h_1,x,h_2(x)) = 0$. We can thus treat $l^g$ as a binary loss. We see that the statement now becomes equivalent to the No-Free-Lunch Theorem for binary classification (see Theorem 5.1 in \cite{2shaibook}) with the $\epsilon$-parameter in the sample complexity being scaled by $\eta$. Thus we can conclude $m_F^{rlzb}(\frac{\eta}{8},\frac{1}{7}) \geq n$. This concludes our proof.

\end{proof}

\begin{proof}[Proof of Theorem~\ref{thm:contlossweakdim}]
Let $\C$ be the class of all PAC-learnable function classes with respect to $L^g$. We know from Lemma~\ref{lemma:cofinal} that it is sufficient to show that $\{m_{H}(1/\cdot,1/7): H \in \C \}$ is cofinal in $\naturals^{\naturals}$. Now let $g\in \naturals^{\naturals}$ be arbitrary. We now construct a class $H$, such that $m_H(\frac{1}{\cdot},\frac{1}{7})$ eventually dominates $g$. Let $H = H_{\bar{\eta},\bar{n}}$, with $\bar{\eta}(k)=8/k$ and $\bar{n} = 4(g(k)+ 1)$ for every $k\in \naturals$. From Lemma~\ref{lemma:classificationupperbound} we know that the class is learnable as $\lim_{k \to \infty} \frac{8}{k} =0$. From Lemma~\ref{lemma:contlosslowerbound} we get $g(k)< \frac{\bar{n}(k)}{4} < m_H(1/k,1/7)$. We note that the notion of "learnability" used here can either refer to the realizable case and sample complexity function $m_H^{rlzb}$ or the 1-agnostic case and sample complexity function $m_H$. In either case, the statements are true, giving us both claims. 
\end{proof}

\begin{proof}[Proof of Theorem~\ref{thm:contlossfinchar}]
We use Theorem~\ref{thm:finchar} to prove the claim. We thus only need to show that the two conditions of the theorem are fulfilled. We will focus on the sub-task of learning function classes with range bounded by 1. Since we only use function classes of the form $H_{\bar{\eta}, \bar{n}}$ and those all consist of functions with range in $[0,1]$, this does not cause any issue. 
\begin{itemize}
    \item \emph{Every finite unition of learnable classes of hypotheses is learnable}:Let $\H = bigcup_{i=1}^k H^i$ be a union of learnable classes. Let $A_i$ denote the learner and $m_i$ denote the sample complexity for learning $H_i$. Now define $m_{max}(\epsilon,\delta) = \max(\{m_i(\epsilon/2,\delta/2)\}\cup \{\frac{4k + 4\log{2/\delta}}{\epsilon^2}\})$. Let $A$ be the learner that first runs every $A_i$ on an input sample to create a finite hypothesis class $H'(S) =\{A_i(S): 1 \leq i \leq k \}$ of candidates of size $k$ and then runs ERM on the finite hypothesis class. Now $P$ be some distribution over $\X\times \Y$ and let $m\geq m_{max}(\epsilon,\delta)$. Furthermore let $S\sim P^{m}$. We know that there is $j$, such that $\inf_{h\in H^j} L_P^g(h) = \inf_{h\in \H} L_P^g(h)$. Now $A_j$ guarantess that $\inf_{h\in H'(S)}L_P^g \leq \frac{\epsilon}{2}$. Furthermore, from Hoeffding's inequality and union bound, we get that a sample size of $m_{max}(\epsilon,\delta)$ is sufficient for an ERM to guarantee $(\frac{\epsilon}{2},\frac{\delta}{2})$-success, when learning any finite hypothesis class of size $k$ with functions with range $[0,1]$. Thus the learner $A$ successfully PAC learns $\H$ with sample complexity $m_{max}$. 
    \item We define $H_0=\{h_0\}$, with $h_0(x)=0$ for all $x\in \naturals$. Furthermore we let $H_{k} = H_{\bar{\eta}_k,\bar{n}}$, where $\bar{\eta}_k(i) = 1/k$ and $\bar{n}(i) =i$. From Lemma~\ref{lemma:contlosslowerbound} we know that none of the classes $H_k$ are learnable. Furthermore for every $k\in \naturals$, $H_0$ is an $\epsilon_k= \frac{1}{k}$ approximation of $H_k$, as argued in the proof of Lemma~\ref{lemma:contlossupperbound}. Lastly $\lim_{k\to \infty} \epsilon_k =0$. This concludes our proof.
\end{itemize}

\end{proof}

\bibliographystyle{unsrtnat}
\bibliography{references}

\end{document}